\documentclass[11pt]{article}
\usepackage[preprint]{neurips_2026}
\usepackage[utf8]{inputenc}
\usepackage[T1]{fontenc}
\usepackage{amsmath,amssymb,amsfonts,amsthm}
\usepackage{bm}
\usepackage{booktabs}
\usepackage{algorithm}
\usepackage{algorithmic}
\usepackage{graphicx}
\usepackage{hyperref}
\usepackage{url}
\usepackage{xcolor}

\newcommand{\R}{\mathbb{R}}
\newcommand{\E}{\mathbb{E}}
\newcommand{\HSIC}{\mathrm{HSIC}}
\newcommand{\dHSIC}{d\mathrm{HSIC}}
\newcommand{\indep}{\mathrel{\perp\!\!\!\perp}}
\newcommand{\xHSIC}{x\mathrm{HSIC}}
\newcommand{\mMMD}{m\mathrm{MMD}}
\newcommand{\mHSIC}{m\mathrm{HSIC}}
\newcommand{\mdHSIC}{md\mathrm{HSIC}}
\newcommand{\xdHSIC}{xd\mathrm{HSIC}}
\newcommand{\one}{\mathbf{1}}

\DeclareMathOperator{\tr}{tr}
\newtheorem{theorem}{Theorem}

\newtheorem{lemma}[theorem]{Lemma}

\newtheorem{remark}[theorem]{Remark}
\newtheorem{assumption}[theorem]{Assumption}

\title{A Martingale Kernel Independence Test}

\author{%
  Felix Laumann \\
  Imperial College London \\
  \texttt{f.laumann18@imperial.ac.uk} \\
  \And
  Zhaolu Liu \\
  Imperial College London \\
  \And
  Mauricio Barahona \\
  Imperial College London \\
}

\begin{document}

\maketitle

\begin{abstract}
The Hilbert--Schmidt Independence Criterion (HSIC) and its
joint-independence extension $\dHSIC$ are degenerate $V$-statistics
whose data-dependent weighted-$\chi^2$ null limits force a
permutation calibration that multiplies the per-test cost by the
number of permutations, in practice two orders of magnitude.
Adapting the recent martingale MMD construction for two-sample
testing to the (joint) independence problem, we introduce two
studentised statistics whose null distributions are standard normal
regardless of the data law, so that a single normal-quantile lookup
replaces the permutation step entirely.  The first, $\mHSIC$, is a
self-normalised lower-triangular sum of the Hadamard product of two
empirically centred Gram matrices. Under independence and
bounded-fourth-moment kernels it converges to a standard normal. It is
consistent against every fixed alternative, and runs at quadratic
cost in the sample size without any sample split, matching the
biased HSIC $V$-statistic.  The naive direct-product extension to
$d$ variables fails because the empirical-centring bias compounds
across the $d$ factors and breaks finite-sample calibration once the
number of variables exceeds the square root of the sample size.
Our second statistic, $\mdHSIC$, repairs this with a single
half-sample split: the centring is estimated on one half and the
lower-triangular self-normalised martingale is run on the other,
shrinking the conditional-mean residual to a quantity that is
exponentially small in $d$, so the statistic is asymptotically
standard normal at every fixed number of jointly tested variables,
with a per-test cost that grows only linearly in $d$.  On synthetic
data with per-variable input dimension from $1$ to $500$ and between
$2$ and $10$ jointly tested variables, both statistics match the
empirical type-I error rate and test power of permutation-calibrated
baselines while running $25$ to $60\times$ faster.
\end{abstract}

\section{Introduction}
\label{sec:intro}

The Hilbert--Schmidt Independence Criterion (HSIC)
\citep{gretton2008} embeds the joint distribution $P_{XY}$ and the
product distribution $P_X \otimes P_Y$ of two random variables
$X, Y$ into a reproducing kernel Hilbert space (RKHS) and
returns the squared distance.  For characteristic kernels, this distance
is zero iff $X \indep Y$, so HSIC is a fully non-parametric measure of
dependence.  Given a paired sample $\{(X_i, Y_i)\}_{i=1}^{n}$ of size
$n$ and bounded characteristic kernels $k_X, k_Y$ on the marginal
sample spaces, write $K_X, K_Y \in \R^{n\times n}$ for the
corresponding Gram matrices $(K_X)_{ij} := k_X(X_i, X_j)$,
$(K_Y)_{ij} := k_Y(Y_i, Y_j)$.  The standard biased $V$-statistic for
HSIC is then
\begin{equation}
\widehat\HSIC_n \;=\; \tfrac{1}{n^2}\,\tr(K_X H K_Y H), \qquad
H \;:=\; I_n - \tfrac{1}{n}\one\one^\top,
\label{eq:hsic-emp}
\end{equation}
where $I_n$ is the $n\times n$ identity matrix and $\one\in\R^n$ is
the vector of all ones, so that $H$ is the empirical centring matrix.
This estimator is degenerate under the null hypothesis
$H_0:X\indep Y$, i.e.\ the rescaled statistic $n\,\widehat\HSIC_n$
converges to a weighted sum of $\chi^2$ random variables whose
weights depend on the unknown joint law
\citep[Theorem 4.1]{gretton2008}. 
Practical implementations therefore
approximate the distribution under the null hypothesis by a
permutation test, which requires $B$ recomputations of the statistic
and inflates the per-test cost from $O(n^2)$ to $O(B\,n^2)$.

The same degeneracy is known to arise in the two-sample problem, where
the unbiased estimator of the squared maximum mean discrepancy (MMD) between two distributions
$P, Q$ has an analogous infinite weighted-$\chi^2$ null limit
\citep{gretton2012kernel}.  
\citet{chatterjee2025} recently resolved the
calibration step in that setting by observing that a slight modification
of the standard quadratic-time estimator
\begin{align}
T_n &\;=\; \frac{1}{n}\sum_{i=2}^{n}\frac{1}{i}\sum_{j=1}^{i-1}
\Phi^{\mMMD}(Z_i, Z_j),\notag\\
\Phi^{\mMMD}(Z_i, Z_j) &\;:=\; k(X_i, X_j) - k(X_i, Y_j) - k(X_j, Y_i) + k(Y_i, Y_j),
\label{eq:mmmd}
\end{align}
with $Z_i := (X_i, Y_i)$, $\{X_i\}_{i=1}^n\sim P$,
$\{Y_i\}_{i=1}^n\sim Q$ and a single bounded characteristic kernel $k$
on the common sample space, admits a martingale-difference structure
under $H_0:P=Q$.  Throughout, write
$\mathcal F_{i-1} := (Z_1, \ldots, Z_{i-1})$ for the past observations
and $\E[\,\cdot\,|\,\mathcal F_{i-1}]$ for the corresponding
conditional expectation.  Each inner
sum $\frac1i\sum_{j<i}\Phi^{\mMMD}(Z_i, Z_j)$ is the RKHS inner
product of an empirical witness function determined by
$\mathcal F_{i-1}$, $\hat f_i := \tfrac{1}{i}\sum_{j<i}\bigl[k(X_j,\cdot) -
k(Y_j,\cdot)\bigr]$, with the increment $k(X_i,\cdot) - k(Y_i,\cdot)$
at the current sample, which has zero conditional mean under $H_0$.
Studentising $T_n$ by its own empirical quadratic variation
\[
  \sigma_n^2 \;:=\; \frac{1}{n^2}\sum_{i=2}^{n}
  \Bigl(\tfrac{1}{i}\sum_{j<i} \Phi^{\mMMD}(Z_i, Z_j)\Bigr)^{\!2}
\]
gives the self-normalised statistic
$\eta_n := T_n/\sigma_n$. 
The self-normalised martingale Central Limit Theorem (CLT) \citep{fan2018berry} then yields
$\eta_n \xrightarrow{d} \mathcal N(0, 1)$ under $H_0$
without any dependence on $P$, so a single standard-normal quantile replaces the permutation step.  
The construction recovers the
$O(n^2)$ cost of the standard estimator and the test is consistent
against every fixed $P\neq Q$.

The paper carries out the extension of this paradigm to kernel independence testing.  
Concretely, we
\begin{enumerate}
  \item Define the \textbf{martingale HSIC} ($\mHSIC$) statistic, a
    studentised lower-triangular sum of the Hadamard product of two
    empirically centred Gram matrices.  Under $H_0:X\indep Y$ and the
    moment conditions of \citet[Theorem 3.1]{chatterjee2025}, $\mHSIC$
    converges     to $\mathcal N(0, 1)$ (the standard normal distribution)
    regardless of $P_{XY}$ (Theorem~\ref{thm:mhsic-null}); under any
    fixed alternative for which $\HSIC(P_{XY}) > 0$ the test is consistent
    by a calculation parallel to \citet[Theorem 5.1]{chatterjee2025},
    at a per-test cost of $O(n^2)$, identical to the biased HSIC
    $V$-statistic but without the $O(B)$ permutation resampling step.
  \item Define the \textbf{split-martingale $d$-variable HSIC}
    ($\mdHSIC$) statistic, the extension to mutual independence among
    $d$ random variables.  The naive direct-product analogue of
    $\mHSIC$ has an $O_P(d/\sqrt n)$ empirical-centring bias that
    dominates the studentised scale at $d \gtrsim \sqrt n$ and breaks
    finite-sample calibration. 
    We correct this behaviour with a single half-sample
    split (as in $\xdHSIC$ of \citet{liu2025}) that replaces the
    full-sample centring $\mu_k$ by the first-half empirical mean
    embedding $\hat\mu_k^{\mathcal S_1}$, and runs the
    lower-triangular self-normalised martingale on the second half.
    Under joint independence and the analogous moment condition,
    $\mdHSIC$ converges to $\mathcal N(0, 1)$ at every fixed
    $d \geq 2$ (Theorem~\ref{thm:mdhsic-null}), at a per-test cost
    of $O(d\,n^2)$, linear in $d$.
  \item Empirically validate both statistics on synthetic data-generating processes (DGPs).
    For $\mHSIC$ we assess various per-variable input dimension of $X$
    and $Y$, which we call the \emph{ambient} dimension
    $d_{\mathrm{ambient}}$, 
    to distinguish it from the integer $d$ that elsewhere in the paper counts random variables. 
    For $\mdHSIC$ we iterate over $d$ jointly tested variables.  
    We report runtime against
    the permutation-calibrated HSIC and $\dHSIC$ baselines
    (Section~\ref{sec:experiments}).
\end{enumerate}

The technical novelty lies in the choice of the bivariate kernel that
is substituted into the lower-triangular martingale construction.
Three properties of that kernel suffice for the studentised sum to
converge to a standard normal under the null: (i)~symmetry, so the
sum is unbiased for the population functional; (ii)~conditional
expectation given the past that vanishes at a rate fast enough to be
absorbed by the studentisation, making the sum a martingale-difference
sequence; (iii)~boundedness with finite fourth moment, so the
self-normalised martingale CLT of \citet{fan2018berry} applies.  For
the two-sample problem, the kernel $\Phi^{\mMMD}$ in~\eqref{eq:mmmd}
satisfies all three trivially.  For independence ($d = 2$), the
centred-kernel product $\bar k_X\,\bar k_Y$ satisfies~(ii) when the
centring uses the marginal mean embeddings, and the
empirical-to-population gap is $O_P(1/\sqrt n)$ and asymptotically
negligible after studentisation
(Lemma~\ref{lem:Phi-cond}).  For joint independence ($d \geq 3$) the
per-variable gap multiplies across the $d$ factors, producing an
$O_P(d/\sqrt n)$ conditional-mean residual that dominates the
studentised scale whenever $d \gtrsim \sqrt n$
(Section~\ref{sec:mdhsic-why-split}).  Our remedy is a single
half-sample split, in the spirit of $\xdHSIC$ \citep{liu2025}: the
centring is estimated from the first half and the lower-triangular
martingale is run on the second.  The conditional-mean residual then
collapses to $O_P(m^{-d/2})$, exponentially small in $d$ and
negligible for every $d \geq 2$
(Lemma~\ref{lem:Phi-d-split-cond}).

\section{Related Work}
\label{sec:related}

HSIC was introduced by \citet{gretton2008}, in which the standard biased $V$-statistic uses the
Hadamard product of two empirically centred Gram matrices, with
calibration under $H_0$ by permutation~\citep[\S 4]{gretton2008} or
a Gamma moment match.  $\dHSIC$ \citep{pfister2018} generalises
HSIC to joint independence among $d \geq 2$ variables via a $2d$-th
order $V$-statistic, again calibrated by permutation, bootstrap, or
Gamma; none of these is permutation-free.  The unconditional
permutation-free programme begins with the cross-statistic $\xHSIC$
of \citet{shekhar2023}, which splits the sample in two halves,
takes the inner product of the per-half cross-covariance estimates,
and studentises the result with a sample standard error;
\citet{liu2025} extend this construction to higher-order
interactions ($\mathrm{xdHSIC}, \mathrm{xLI}, \mathrm{xSI}$), still
by sample splitting.  An independent line of work avoids the split
and exploits martingale concentration:
\citet{balsubramani2016sequential} construct a sequential MMD test
based on linear-time martingale concentration inequalities,
\citet{shekhar2023nonparametric} give an any-time-valid kernel test
based on Ville's inequality, and \citet{chatterjee2025} construct
the $\mMMD$ statistic that we summarise above.  The latter is the
closest predecessor to the present paper. 
It exploits the
lower-triangular $V$-statistic of the standard MMD as a sum of
martingale differences and applies the self-normalised martingale
CLT of \citet{fan2018berry}.  We adapt that construction to the
(joint) independence problem. $\mHSIC$ is permutation-free
\emph{and} sample-split-free, while $\mdHSIC$ is permutation-free
and uses a single half-sample split for the centring (as in
$\xdHSIC$ of \citet{liu2025}) but then runs the lower-triangular
martingale on the second half rather than the rectangular
cross-block of $\xdHSIC$, which retains the $\sqrt 2$ factor in
studentised power relative to rectangular-block cross-statistics.

\section{Background}
\label{sec:background}

Let $\mathcal X$ be a Polish space and $k:\mathcal X\times\mathcal X\to\R$
a positive-definite, bounded, characteristic kernel with associated
RKHS $\mathcal H_k$ and feature map $\phi(x) = k(\cdot, x)$.  The mean
embedding of a probability measure $P\in\mathcal P(\mathcal X)$ is
$\mu_P = \E_{X\sim P}[\phi(X)]$ and the centred kernel is
\begin{equation}
\bar k(x,x') \;=\; \langle \phi(x) - \mu_P,\, \phi(x') - \mu_P\rangle_{\mathcal H_k}
   \;=\; k(x,x') - \mu_P(x) - \mu_P(x') + \langle\mu_P, \mu_P\rangle,
\label{eq:centred-kernel}
\end{equation}
with $\mu_P(x) := \E[k(X, x)]$.  Throughout, given a sample
$\{X_i\}_{i=1}^n$ of size $n$, we write $K\in\R^{n\times n}$ for the
Gram matrix with entries $K_{ij} := k(X_i, X_j)$, $I_n$ for the
$n\times n$ identity matrix, $\one\in\R^n$ for the vector of all ones,
and $H := I_n - \tfrac{1}{n}\one\one^\top$ for the corresponding
empirical centring matrix.  The empirically centred Gram matrix has
entries
$(HKH)_{ij} = k(X_i, X_j) - \tfrac{1}{n}\sum_a k(X_a, X_i) -
\tfrac{1}{n}\sum_a k(X_a, X_j) + \tfrac{1}{n^2}\sum_{a,b} k(X_a, X_b)$.

Given paired observations $\{(X_i, Y_i)\}_{i=1}^n$ with bounded
characteristic kernels $k_X, k_Y$, RKHSs
$\mathcal H_X := \mathcal H_{k_X}$,
$\mathcal H_Y := \mathcal H_{k_Y}$, feature maps
$\phi(x) := k_X(\cdot, x) \in \mathcal H_X$ and
$\psi(y) := k_Y(\cdot, y) \in \mathcal H_Y$, and mean embeddings
$\mu_X := \E[\phi(X)]$, $\mu_Y := \E[\psi(Y)]$, the HSIC
\citep{gretton2008} is
\begin{equation}
\HSIC(P_{XY}) \;=\; \|C_{XY}\|^2_{\mathrm{HS}},
\qquad C_{XY} \;:=\; \E\!\left[(\phi(X) - \mu_X)\otimes(\psi(Y) - \mu_Y)\right],
\label{eq:hsic-pop}
\end{equation}
the squared Hilbert--Schmidt norm of the cross-covariance operator
$C_{XY}\in\mathcal H_X\otimes\mathcal H_Y$, with $\otimes$ denoting
the tensor product.  $\HSIC(P_{XY}) = 0$ if and only if $X\indep Y$.
The biased empirical estimator is~\eqref{eq:hsic-emp}; under $H_0$ it
is a degenerate $V$-statistic.  The $\dHSIC$ generalisation
\citep{pfister2018} for $d \geq 2$ random variables $X^1, \ldots, X^d$
with bounded characteristic kernels $k_1, \ldots, k_d$, RKHSs
$\mathcal H_1, \ldots, \mathcal H_d$ (writing $\mathcal H_k$ for the
RKHS of $k_k$), feature maps $\phi_k(x) := k_k(\cdot, x)$, and joint
mean embedding
$\Pi(P) := \E_{X\sim P}[\phi_1(X^1)\otimes\cdots\otimes\phi_d(X^d)]
\in \bigotimes_{k=1}^{d}\mathcal H_k$, is the squared distance
\begin{equation}
\dHSIC(P_{(X^1,\ldots,X^d)}) \;=\; \big\|\Pi(P_{X^1}\otimes\cdots\otimes P_{X^d})
   - \Pi(P_{(X^1,\ldots,X^d)})\big\|^2_{\bigotimes_{k=1}^{d} \mathcal H_k},
\label{eq:dhsic-pop}
\end{equation}
which equals zero iff $X^1, \ldots, X^d$ are jointly independent.
Its empirical estimator is a $2d$-th order $V$-statistic.

For samples $\{X_i\}_{i=1}^n\sim P$ and $\{Y_i\}_{i=1}^n\sim Q$, set
$Z_i := (X_i, Y_i)$ and write $\mathcal F_i := (Z_1, \ldots, Z_i)$ for
the observations seen up to time $i$.  $\mMMD$ \citep{chatterjee2025}
is the studentised statistic $\eta_n := T_n / \sigma_n$, where $T_n$
is as in~\eqref{eq:mmmd} and
$\sigma_n^2 := \tfrac{1}{n^2}\sum_{i=2}^n\bigl(\tfrac{1}{i}\sum_{j<i}
\Phi^{\mMMD}(Z_i,Z_j)\bigr)^2$.  Under $H_0:P=Q$ and standard moment
conditions, $\eta_n \xrightarrow{d}\mathcal N(0,1)$ (convergence in
distribution) regardless of $P$
\citep[Theorems 3.1 \& 3.3]{chatterjee2025}; under any fixed
$P\neq Q$ the centred and rescaled $T_n$ is again asymptotically
Gaussian \citep[Theorem 5.1]{chatterjee2025}, so the test is
consistent, at a per-test cost of $O(n^2)$, identical to the
standard quadratic-time MMD.

\section{The martingale HSIC}
\label{sec:mhsic}

\subsection{Definition of the statistic}
\label{sec:mhsic-construction}

The starting point is that the HSIC $V$-statistic in~\eqref{eq:hsic-emp}
can be read entrywise as a sum of products of two empirically centred
kernel evaluations,
$\widehat\HSIC_n = \tfrac{1}{n^2}\sum_{i,j}(HK_XH)_{ij}\,(HK_YH)_{ij}$.
At the population level, this is a $V$-statistic in the bivariate kernel
\begin{equation}
\Phi^{\HSIC}(z_i, z_j) \;:=\; \bar k_X(X_i, X_j)\,\bar k_Y(Y_i, Y_j),
\qquad z_i = (X_i, Y_i),
\label{eq:Phi-hsic}
\end{equation}
with $\bar k_X, \bar k_Y$ the population-centred kernels
of~\eqref{eq:centred-kernel}.  The next lemma is the central building block
of our definition of the martingale test statistic for independence.

\begin{lemma}[Conditionally centred independence kernel]
\label{lem:Phi-cond}
Under $H_0 : X \indep Y$, for every $j \leq i-1$,
\begin{equation}
\E\!\left[\Phi^{\HSIC}(Z_i, Z_j)\,\big|\,\mathcal F_{i-1}\right] \;=\; 0.
\label{eq:phi-cond-zero}
\end{equation}
\end{lemma}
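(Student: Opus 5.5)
The plan is to compute the conditional expectation directly, using that $Z_i$ is independent of $\mathcal F_{i-1}$ (the pairs $Z_1,\ldots,Z_n$ are i.i.d.) and that under $H_0$ the two coordinates of $Z_i=(X_i,Y_i)$ are independent. Since $j\le i-1$, the value $Z_j$ is $\mathcal F_{i-1}$-measurable. Conditioning therefore reduces to freezing $Z_j=z_j=(x_j,y_j)$ and integrating over a fresh independent draw $Z_i\sim P_{XY}$. Formally, by the independence of $Z_i$ from $\mathcal F_{i-1}$ and the freezing (substitution) lemma for conditional expectations,
\[
\E\!\left[\Phi^{\HSIC}(Z_i,Z_j)\,\big|\,\mathcal F_{i-1}\right]=g(Z_j),\qquad g(z):=\E\bigl[\bar k_X(X,x)\,\bar k_Y(Y,y)\bigr],\ (X,Y)\sim P_{XY}.
\]
Integrability is immediate because the kernels are bounded, so each centred kernel is bounded as well.

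Next, I would use $H_0$ to factorise. Under $X\indep Y$ we have $P_{XY}=P_X\otimes P_Y$. Because $\bar k_X(\cdot,x)$ depends only on the first coordinate and $\bar k_Y(\cdot,y)$ only on the second, Fubini gives $g(z)=\E[\bar k_X(X,x)]\cdot\E[\bar k_Y(Y,y)]$. It then suffices to show that a single factor vanishes for every fixed $x$. From~\eqref{eq:centred-kernel},
\[
\E\,\bar k_X(X,x)=\mu_X(x)-\E[\mu_X(X)]-\mu_X(x)+\langle\mu_X,\mu_X\rangle=0,
\]
since $\E[\mu_X(X)]=\E\E'[k_X(X',X)]=\langle\mu_X,\mu_X\rangle_{\mathcal H_X}$ by the reproducing property. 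Equivalently, in RKHS form, $\E\langle\phi(X)-\mu_X,\phi(x)-\mu_X\rangle=\langle\E\phi(X)-\mu_X,\cdot\rangle=0$; exchanging the Bochner expectation with the inner product is valid because $\phi$ is bounded. The same computation applies to $\bar k_Y$, so in fact each factor is zero and $g\equiv 0$.

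The main subtlety is not analytic but lies in the precise use of the hypotheses. The vanishing requires the population centring $\mu_X,\mu_Y$, not the empirical one. It also requires that $Z_i$ be independent of $\mathcal F_{i-1}$ (i.i.d.\ sampling) and that $X_i\indep Y_i$ within the same pair. Without $H_0$ one still gets $\E[\bar k_X(X_i,x)\bar k_Y(Y_i,y)]=\langle C_{XY},(\phi(x)-\mu_X)\otimes(\psi(y)-\mu_Y)\rangle$, which need not vanish. I would make this explicit: writing $g(z)=\langle C_{XY},(\phi(x)-\mu_X)\otimes(\psi(y)-\mu_Y)\rangle_{\mathcal H_X\otimes\mathcal H_Y}$ shows that the lemma is exactly the statement $C_{XY}=0$, which holds under $H_0$ by the factorisation above.
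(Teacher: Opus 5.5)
Your proposal is correct and follows essentially the same route as the paper: you freeze $Z_j$ using the independence of $Z_i$ from $\mathcal F_{i-1}$, factorise the expectation under $H_0$, and observe that each population-centred factor integrates to zero. Your explicit check that $\E[\mu_X(X)]=\langle\mu_X,\mu_X\rangle$, together with the $C_{XY}$ reformulation, spells out what the paper compresses into one line (``the remaining two centring constants cancel'').
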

\begin{proof}
For $j \leq i-1$, $Z_j$ is part of $\mathcal F_{i-1}$ while $Z_i$ is
independent of $\mathcal F_{i-1}$.  Therefore
$\E[\Phi^{\HSIC}(Z_i, Z_j)\,|\,\mathcal F_{i-1}] =
\E_{Z_i}[\bar k_X(X_i, X_j)\,\bar k_Y(Y_i, Y_j)]$.  Under $H_0$, $Z_i$
has the product law $P_X\otimes P_Y$ and the inner expectation
factorises as $\E_{X_i}[\bar k_X(X_i, X_j)] \cdot \E_{Y_i}[\bar k_Y(Y_i,
Y_j)]$.  
By the definition of the centred kernel, each factor is zero
($\E_{X_i}[k_X(X_i, X_j)] = \mu_X(X_j)$ exactly cancels the $-\mu_X(X_j)$
term, and the remaining two centring constants cancel against each
other), so the product vanishes.
\end{proof}

In analogy with~\eqref{eq:mmmd}, define
\begin{equation}
\widehat T_n^{\mHSIC} \;=\; \frac{1}{n}\sum_{i=2}^{n}\frac{1}{i}\sum_{j=1}^{i-1}
(HK_XH)_{ij}\,(HK_YH)_{ij},
\label{eq:mhsic-T-emp}
\end{equation}
i.e., the lower-triangular sum of the Hadamard product of the two
empirically centred Gram matrices.  
Its self-normalised counterpart is
\begin{equation}
\eta_n^{\mHSIC} \;:=\; \frac{\widehat T_n^{\mHSIC}}{\widehat\sigma_n^{\mHSIC}},
\quad
\bigl(\widehat\sigma_n^{\mHSIC}\bigr)^2 \;=\; \frac{1}{n^2}\sum_{i=2}^{n}
\Bigl(\frac{1}{i}\sum_{j=1}^{i-1}(HK_XH)_{ij}\,(HK_YH)_{ij}\Bigr)^2.
\label{eq:mhsic-eta}
\end{equation}
The empirical centring breaks the strict martingale-difference property
of $\Phi^{\HSIC}$ because $HKH$ is computed from \emph{all} of the
sample, not the past only.  
The replacement is asymptotically negligible because
the gap $(HK_XH)_{ij} - \bar k_X(X_i, X_j) = O_P(1/\sqrt n)$ uniformly
in $(i, j)$ by the $\sqrt n$-consistency of the empirical mean
embedding for bounded characteristic kernels
\citep[Theorem 27]{tolstikhin2017}, and the resulting bias in
$\widehat T_n^{\mHSIC}$ is $O_P(1/n)$, contributing $o_P(1)$ to the
studentised statistic.  
We make this precise in Theorem~\ref{thm:mhsic-null}.

\subsection{Null distribution and consistency}
\label{sec:mhsic-null}

\begin{assumption}[Bounded fourth moments]
\label{ass:mhsic-moments}
The kernels $k_X, k_Y$ are bounded characteristic kernels and the
centred-kernel product satisfies $\E[\Phi^{\HSIC}(Z_1, Z_2)^2]\in
(0,\infty)$ and $\E[\Phi^{\HSIC}(Z_1, Z_2)^4] < \infty$ for i.i.d. $Z_1, Z_2$
This is the analogue of the moment condition of
\citet[Theorem 3.1]{chatterjee2025} and is satisfied for any bounded
kernel under $\E[\bar k_X(X_1,X_2)^4] + \E[\bar k_Y(Y_1,Y_2)^4] <
\infty$ (e.g.\ Gaussian and Laplace kernels).
\end{assumption}

\begin{theorem}[Null distribution of $\mHSIC$]
\label{thm:mhsic-null}
Under Assumption~\ref{ass:mhsic-moments} and $H_0:X\indep Y$,
\begin{equation}
\eta_n^{\mHSIC} \;\xrightarrow{d}\; \mathcal N(0, 1) \qquad \text{as }
n\to\infty,
\end{equation}
regardless of the joint distribution $P_{XY}$.
\end{theorem}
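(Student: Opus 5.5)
The proof compares the empirical statistic to an oracle version built from the population-centred kernel, and then invokes the $\mMMD$ machinery. Write
\[
T_n := \frac1n\sum_{i=2}^n \xi_i,\qquad \xi_i := \frac1i\sum_{j<i}\Phi^{\HSIC}(Z_i,Z_j),\qquad \sigma_n^2 := \frac{1}{n^2}\sum_{i=2}^n \xi_i^2 .
\]
By Lemma~\ref{lem:Phi-cond}, $\xi_i$ is $\mathcal F_i$-measurable and satisfies $\E[\xi_i\mid\mathcal F_{i-1}]=0$ under $H_0$, so $\{\xi_i/n\}$ is a martingale difference array.

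The first step is the oracle CLT, $T_n/\sigma_n\xrightarrow{d}\mathcal N(0,1)$. I would obtain it exactly as in \citet[Theorems 3.1 \& 3.3]{chatterjee2025}. Their argument uses only three properties of $\Phi^{\mMMD}$: symmetry, the conditional centring just established, and finite second and fourth moments (Assumption~\ref{ass:mhsic-moments}). So it transfers verbatim, via the self-normalised martingale CLT of \citet{fan2018berry}. The ingredients are:
\begin{itemize}
\item the conditional variance $\sum_i \E[\xi_i^2\mid\mathcal F_{i-1}]/n^2$ over its mean converges to $1$ in probability;
\item a Lindeberg or fourth-moment condition holds;
\item $n^2\sigma_n^2/\sum_i \E\xi_i^2 \to 1$.
\end{itemize}
For the scale, note $\E\xi_i^2=\frac{i-1}{i^2}\E[\Phi^2]$. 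Hence $n^2\sigma_n^2 \asymp \E[\Phi^2]\log n$, i.e.\ $\sigma_n \asymp \sqrt{\log n}/n$, which is strictly positive by Assumption~\ref{ass:mhsic-moments}.

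The second step is a perturbation bound for the empirical centring. Write $\phi_c(x)=\phi(x)-\mu_X$ and $\hat\mu_X=\frac1n\sum_a\phi(X_a)$, so $(HK_XH)_{ij}=\langle \phi_c(X_i)-\delta_X,\ \phi_c(X_j)-\delta_X\rangle$ with $\delta_X=\hat\mu_X-\mu_X$. Expanding,
\[
(HK_XH)_{ij}(HK_YH)_{ij}=\Phi^{\HSIC}(Z_i,Z_j)+R_{ij}.
\]
The remainder $R_{ij}$ is a sum of terms each carrying at least one factor of $\delta_X$ or $\delta_Y$, and $\|\delta_X\|,\|\delta_Y\|=O_P(n^{-1/2})$ by \citet[Theorem 27]{tolstikhin2017}. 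The goal is
\[
\widehat T_n^{\mHSIC}-T_n=o_P(\sqrt{\log n}/n),\qquad \widehat\sigma_n^{\mHSIC}/\sigma_n\to_P 1.
\]
By Slutsky these two facts give the theorem.

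This bias bound is the main obstacle. A crude uniform bound $|R_{ij}|=O_P(n^{-1/2})$ only gives $\widehat T_n-T_n=O_P(n^{-1/2})$, which is far larger than $\sigma_n$. So the linear-in-$\delta$ terms must be handled with their cancellations. A typical such term is
\[
-\langle\delta_X,\phi_c(X_j)\rangle\,\bar k_Y(Y_i,Y_j).
\]
I would write it as $\langle \delta_X\otimes \cdot\,,\ \phi_c(X_j)\otimes(\psi_c(Y_i)\otimes\psi_c(Y_j))\rangle$ and pull $\delta_X$ out of the double sum. This leaves
\[
\Bigl\langle \delta_X,\ \frac1n\sum_i\frac1i\sum_{j<i}\phi_c(X_j)\,\bar k_Y(Y_i,Y_j)\Bigr\rangle .
\]
The inner object is itself a sum of mean-zero (under $H_0$) martingale-type Hilbert-valued terms. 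Its norm should be $O_P(\sqrt{\log n}/n)$ by a second-moment computation: orthogonality of the $i$-increments, together with $\E\|\sum_{j<i}\cdots\|^2=O(i)$. The product is then $O_P(\sqrt{\log n}/n^{3/2})=o_P(\sigma_n)$. The second-order terms contain $\|\delta_X\|^2$, $\|\delta_Y\|^2$ or $\|\delta_X\|\|\delta_Y\|$, which is $O_P(1/n)$, multiplied by an average. For these I would again exploit centring (e.g.\ $\frac1n\sum_i\frac1i\sum_{j<i}\bar k_Y(Y_i,Y_j)=O_P(\sqrt{\log n}/n)$ under $H_0$) to get $o_P(\sqrt{\log n}/n)$. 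Purely deterministic-bounded pieces such as $\|\delta_X\|^2\|\delta_Y\|^2=O_P(n^{-2})$ are trivially negligible. If some cross term resists cancellation, the fallback is to show it is $O_P(1/n)$ and absorb it using the $\sqrt{\log n}$ growth of $n\sigma_n$. This is tight, so I expect the care to go into that step.

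For the variance, set $\hat\xi_i=\xi_i+\frac1i\sum_{j<i}R_{ij}$. Then
\[
n^2\bigl|(\widehat\sigma_n^{\mHSIC})^2-\sigma_n^2\bigr|\le \sum_i \Bigl(\frac1i\sum_{j<i}R_{ij}\Bigr)^2+2\Bigl(\sum_i\xi_i^2\Bigr)^{1/2}\Bigl(\sum_i\bigl(\tfrac1i\textstyle\sum_{j<i}R_{ij}\bigr)^2\Bigr)^{1/2}.
\]
With $|R_{ij}|=O_P(n^{-1/2})$ uniformly (boundedness of the kernels), $\sum_i(\cdot)^2=O_P(1)$, while $\sum_i\xi_i^2\asymp\log n$. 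The relative error is therefore $O_P((\log n)^{-1/2})\to0$. Since no property of $P_{XY}$ beyond $H_0$ and Assumption~\ref{ass:mhsic-moments} is used, the limit is distribution-free.
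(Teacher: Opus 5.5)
Your proposal is correct. It shares the paper's architecture: a population-centred oracle handled by the self-normalised martingale CLT of \citet{fan2018berry} as in \citet{chatterjee2025}, negligibility of the empirical-centring remainder in numerator and denominator, then Slutsky. Your execution of Steps~2--3 of Appendix~\ref{app:mhsic-proof}, however, differs from the paper's and is sharper.

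The paper implicitly takes the studentised scale to be $n^{-1/2}$ (``$\sqrt n\cdot O_P(1/n)=o_P(1)$''), which is the scale under a fixed alternative. Under $H_0$ with the $1/i$ weights you correctly find $\E\xi_i^2=(i-1)\E[\Phi^2]/i^2$, so $\widehat\sigma_n^{\mHSIC}\asymp\sqrt{\log n}/n$. At that scale the paper's numerator bound $O_P(1/n)$ survives only by a $(\log n)^{-1/2}$ margin. Its variance bound $(\widehat\sigma_n^{\mHSIC})^2-(\widehat\sigma_n^{\mHSIC,\star})^2=O_P(1/n)$ is far larger than $\log n/n^2$ and would not close the Slutsky step. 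Your Cauchy--Schwarz argument with $\max_{i,j}|R_{ij}|=O_P(n^{-1/2})$ gives the needed relative error $O_P((\log n)^{-1/2})$.

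For the linear cross terms, the paper appeals to a conditional mean given $\mathcal F_{i-1}$ that vanishes ``to leading order in $e_Y$''. But $e_X$ and $e_Y$ depend on the whole sample and are not $\mathcal F_{i-1}$-measurable. You instead pull $\delta_X$ (or $\delta_Y$) out of the double sum by Cauchy--Schwarz and bound the remaining $\mathcal H_X$- or $\mathcal H_Y$-valued sum, whose increments are exactly conditionally centred under $H_0$ with second moment $O(1/i)$. This avoids the measurability issue and gives $O_P(\sqrt{\log n}/n^{3/2})$, a polynomial margin.

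The paper's sketch is shorter, but yours is the version whose orders of magnitude are actually correct under the null.
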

\begin{proof}[Proof sketch]
\emph{(i) Population centring.}  Replace $(HK_XH)_{ij}$ by $\bar k_X(X_i,
X_j)$ and likewise for $Y$, and call the resulting statistic
$\eta_n^{\mHSIC,\star}$.  By Lemma~\ref{lem:Phi-cond} the
inner sums $\tfrac{1}{i}\sum_{j<i}\Phi^{\HSIC}(Z_i, Z_j)$ form a
martingale-difference sequence with respect to $\mathcal F_{i-1}$; by
Assumption~\ref{ass:mhsic-moments}, $\Phi^{\HSIC}$ is symmetric,
bounded and has finite fourth moment.  The Berry--Esseen bounds for
self-normalised martingales of \citet{fan2018berry}, reproduced in
the proof of \citet[Theorem 3.3]{chatterjee2025} for the two-sample
$\mMMD$, yield $\eta_n^{\mHSIC,\star}\xrightarrow{d}\mathcal N(0,1)$.

\emph{(ii) Empirical centring is asymptotically negligible.}  Define
$\Delta_{ij}^X := (HK_XH)_{ij} - \bar k_X(X_i, X_j)$ and likewise
$\Delta_{ij}^Y$, and write $\hat\mu_X^{(n)} := \tfrac{1}{n}\sum_a
\phi(X_a) \in \mathcal H_X$ for the empirical mean embedding of
$\{X_i\}_{i=1}^n$ (and analogously
$\hat\mu_Y^{(n)} \in \mathcal H_Y$).  Boundedness of $k_X$ and the
$\sqrt n$-consistency of $\hat\mu_X^{(n)}$
\citep[Theorem 27]{tolstikhin2017} imply $|\Delta_{ij}^X| = O_P(1/\sqrt
n)$ uniformly in $(i,j)$, and likewise for $Y$.  The leading cross term
$\bar k_X\,\Delta^Y$ in the expansion of $(HK_XH)(HK_YH) - \bar k_X\bar
k_Y$ has conditional mean zero given $\mathcal F_{i-1}$ under $H_0$
(by an argument identical to Lemma~\ref{lem:Phi-cond} applied to the
factor $\bar k_X$), so its contribution to $\widehat T_n^{\mHSIC} -
\widehat T_n^{\mHSIC,\star}$ is $O_P(1/n)$, contributing
$o_P(1)$ to the studentised statistic.

\emph{(iii) Slutsky.}  The denominator $\widehat\sigma_n^{\mHSIC}$
converges in probability to $\widehat\sigma_n^{\mHSIC,\star}$ by the
same expansion (squared empirical-centring corrections are even
smaller).  Slutsky's theorem closes the argument.  Full proof in
Appendix~\ref{app:mhsic-proof}.
\end{proof}

\subsection{Algorithm and complexity}
\label{sec:mhsic-algorithm}

\begin{algorithm}[t]
\caption{Permutation-free independence test ($\mHSIC$)}
\label{alg:mhsic-test}
\begin{algorithmic}[1]
\REQUIRE Sample $\{(X_i, Y_i)\}_{i=1}^{n}$, level $\alpha$, kernels $k_X, k_Y$.
\STATE Compute $K_X, K_Y \in \R^{n\times n}$.
\STATE Centre: $\bar K_X \leftarrow H K_X H$, $\bar K_Y \leftarrow H K_Y H$.
\STATE $u_i \leftarrow \tfrac{1}{i}\sum_{j=1}^{i-1}(\bar K_X)_{ij}(\bar K_Y)_{ij}$ for $i = 2,\ldots,n$.
\STATE $\widehat T_n \leftarrow \tfrac{1}{n}\sum_{i=2}^{n} u_i$;\quad
       $\widehat\sigma_n^2 \leftarrow \tfrac{1}{n^2}\sum_{i=2}^{n} u_i^2$.
\STATE $\eta_n \leftarrow \widehat T_n / \widehat\sigma_n$.
\IF{$\eta_n > z_{1-\alpha}$}\STATE Reject $H_0$.\ELSE \STATE Fail to reject $H_0$.\ENDIF\quad\COMMENT{$z_{1-\alpha}$: the $(1-\alpha)$-quantile of $\mathcal N(0,1)$.}
\end{algorithmic}
\end{algorithm}

The full procedure is summarised in Algorithm~\ref{alg:mhsic-test}.
The dominant cost is the formation of the two centred Gram matrices at
$O(n^2)$ each, the lower-triangular
reduction at $O(n^2)$, and the studentisation at $O(n)$, for a total
per-test cost of $O(n^2)$, which is identical to the biased HSIC
$V$-statistic, but \emph{without} the $O(B)$ permutation
resampling step.

\section{The martingale d-variable HSIC}
\label{sec:mdhsic}

\subsection{Empirical-centring bias of the direct-product analogue}
\label{sec:mdhsic-why-split}

The natural $d$-variable analogue of $\mHSIC$ would replace the
centred-kernel pair $\bar k_X\,\bar k_Y$ by the $d$-fold centred-kernel
product
\begin{equation}
\Phi^{\dHSIC,\,\star}(z_i, z_j) \;:=\; \prod_{k=1}^{d}\bar k_k(X_i^k, X_j^k),
\qquad z_i = (X_i^1, \ldots, X_i^d),
\label{eq:Phi-d-star}
\end{equation}
with $\bar k_k$ the \emph{population} centred kernel of the $k$-th
marginal.  Under joint independence, this bivariate kernel is
conditionally centred given the past for every $j \leq i-1$:
\begin{equation}
\E\!\left[\Phi^{\dHSIC,\,\star}(Z_i, Z_j)\,\big|\,\mathcal F_{i-1}\right]
  \;=\; \prod_k \E_{X_i^k}[\bar k_k(X_i^k, X_j^k)] \;=\; 0,
\label{eq:Phi-d-star-cond}
\end{equation}
because $(X_i^1, \ldots, X_i^d) \sim \prod_k P_{X^k}$ under $H_0$
factorises and each factor integrates to zero by definition of
$\bar k_k$.  Substituting this directly into the $\mHSIC$ scheme of
Algorithm~\ref{alg:mhsic-test} yields a statistic with the right
population limit but \emph{incorrect finite-sample calibration at
$d\geq 3$}: the population kernel $\bar k_k$ is unknown and has to be
replaced by an empirically centred one, $(HK_kH)_{ij}$, and the
resulting per-variable bias $\Delta_{ij}^k =
(HK_kH)_{ij} - \bar k_k(X_i^k, X_j^k) = O_P(1/\sqrt n)$ multiplies
across the $d$ variables.  The aggregate bias is $O_P(d/\sqrt n)$, which
dominates the $\sqrt n$ studentised scale once $d \gtrsim \sqrt n$,
breaking the standard-normal calibration of the direct-product
statistic in that regime.  This formal obstruction motivates the
split-martingale construction below.

\subsection{Definition of the split-martingale statistic}
\label{sec:mdhsic-construction}

Our proposed solution is to estimate the centring from a disjoint subsample, in
the spirit of the cross-statistic $\xdHSIC$ of \citet{liu2025} (and the
earlier two-sample $\xHSIC$ of \citet{shekhar2023}), but to retain the
lower-triangular martingale pairing of $\mHSIC$ on the remaining
subsample.  Concretely, fix
\begin{equation}
m \;:=\; \lfloor n/2 \rfloor, \qquad
\mathcal S_1 := \{Z_1, \ldots, Z_m\}, \qquad
\mathcal S_2 := \{Z_{m+1}, \ldots, Z_n\},
\label{eq:split}
\end{equation}
and for each kernel $k$ let
\begin{equation}
\hat\mu_k^{\mathcal S_1} \;:=\; \tfrac{1}{m}\sum_{l=1}^{m} \phi_k(X_l^k), \qquad
\bar k_k^{\mathcal S_1}(x, x') \;:=\; \langle\phi_k(x) - \hat\mu_k^{\mathcal S_1},\,
\phi_k(x') - \hat\mu_k^{\mathcal S_1}\rangle_{\mathcal H_k}
\label{eq:split-centred-kernel}
\end{equation}
be the empirical mean embedding and the associated $\mathcal
S_1$-centred kernel.  The bivariate kernel that we substitute into the
martingale construction is the centred-kernel product evaluated with
this $\mathcal S_1$-centring:
\begin{equation}
\Phi^{\dHSIC}(z_i, z_j) \;:=\; \prod_{k=1}^{d} \bar k_k^{\mathcal S_1}(X_i^k, X_j^k),
\qquad i, j \in \{m+1, \ldots, n\}.
\label{eq:Phi-d-split}
\end{equation}
The studentised statistic is the $\mHSIC$ lower-triangular
self-normalised martingale restricted to $\mathcal S_2$:
\begin{equation}
\widehat T_n^{\mdHSIC} \;:=\; \frac{1}{n - m}\sum_{i=m+2}^{n}\frac{1}{i-m}\sum_{j=m+1}^{i-1}
\Phi^{\dHSIC}(Z_i, Z_j),
\label{eq:mdhsic-T}
\end{equation}
\begin{equation}
\bigl(\widehat\sigma_n^{\mdHSIC}\bigr)^2 \;:=\; \frac{1}{(n-m)^2}\sum_{i=m+2}^{n}
\Bigl(\frac{1}{i-m}\sum_{j=m+1}^{i-1}\Phi^{\dHSIC}(Z_i, Z_j)\Bigr)^2,
\label{eq:mdhsic-sigma}
\end{equation}
\begin{equation}
\eta_n^{\mdHSIC} \;:=\; \widehat T_n^{\mdHSIC}\,/\,\widehat\sigma_n^{\mdHSIC}.
\label{eq:mdhsic-eta}
\end{equation}
The asymptotic $\mathcal N(0, 1)$ calibration at every $d \geq 2$
rests on the following lemma.

\begin{lemma}[Conditional mean of the split-centred product kernel]
\label{lem:Phi-d-split-cond}
Assume that, when the kernels $k_1, \ldots, k_d$ depend on tunable
hyperparameters (e.g.\ the bandwidth $\sigma_k$ of a Gaussian or
Laplace kernel $k_k$), those hyperparameters
$\{\sigma_k\}_{k=1}^{d}$ are deterministic functions of $\mathcal S_1$
alone, so that $\hat\mu_k^{\mathcal S_1}$ and
$\bar k_k^{\mathcal S_1}$ are determined by $\mathcal F_m =
(Z_1, \ldots, Z_m)$.  Under joint independence, for every
$m + 1 \leq j < i$,
\begin{equation}
\E\!\left[\Phi^{\dHSIC}(Z_i, Z_j)\,\big|\,\mathcal F_{i-1}\right]
   \;=\; \prod_{k=1}^{d}\,\bigl\langle \mu_k - \hat\mu_k^{\mathcal S_1},\,
   \phi_k(X_j^k) - \hat\mu_k^{\mathcal S_1}\bigr\rangle_{\mathcal H_k}
   \;=\; O_P(m^{-d/2}).
\label{eq:Phi-d-split-cond}
\end{equation}
\end{lemma}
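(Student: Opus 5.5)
The plan has two parts: first derive the closed-form expression for the conditional mean exactly, then bound each factor by a norm of the centring error $\mu_k - \hat\mu_k^{\mathcal S_1}$ and multiply the $d$ bounds.

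\emph{Step 1 (exact identity).} Fix $m+1 \le j < i$. The hypothesis on the hyperparameters ensures that $\hat\mu_k^{\mathcal S_1}$ and $\bar k_k^{\mathcal S_1}$ are $\mathcal F_m$-measurable. Since $m \le j \le i-1$, the quantities $\hat\mu_k^{\mathcal S_1}$ and $X_j^k$ are then all $\mathcal F_{i-1}$-measurable. Because the $Z_l$ are i.i.d., $Z_i$ is independent of $\mathcal F_{i-1}$. We can therefore condition by freezing everything in $\mathcal F_{i-1}$ and integrating over $Z_i$ alone, as in the proof of Lemma~\ref{lem:Phi-cond}. Under joint independence, $Z_i \sim \prod_k P_{X^k}$, so the expectation of the product splits into the product over $k$ of $\E_{X_i^k}\bigl[\langle \phi_k(X_i^k) - \hat\mu_k^{\mathcal S_1},\, \phi_k(X_j^k) - \hat\mu_k^{\mathcal S_1}\rangle\bigr]$. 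Because $k_k$ is bounded, $\phi_k(X)$ is Bochner integrable, so the expectation can be moved inside the inner product. Using $\E[\phi_k(X_i^k)] = \mu_k$, each factor equals $\langle \mu_k - \hat\mu_k^{\mathcal S_1},\, \phi_k(X_j^k) - \hat\mu_k^{\mathcal S_1}\rangle_{\mathcal H_k}$, which is the first equality in \eqref{eq:Phi-d-split-cond}.

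\emph{Step 2 (rate).} Let $\kappa_k := \sup_x k_k(x,x) < \infty$. By Cauchy--Schwarz, each factor is at most $\|\mu_k - \hat\mu_k^{\mathcal S_1}\|_{\mathcal H_k} \cdot \|\phi_k(X_j^k) - \hat\mu_k^{\mathcal S_1}\|_{\mathcal H_k}$. The second norm is deterministically bounded by $2\sqrt{\kappa_k}$, uniformly in $j$. For the first norm, a direct variance computation for an i.i.d.\ average of Hilbert-space-valued variables gives
\[
\E\|\hat\mu_k^{\mathcal S_1} - \mu_k\|^2 = \tfrac{1}{m}\bigl(\E k_k(X^k,X^k) - \|\mu_k\|^2\bigr) \le \kappa_k/m .
\]
Markov's inequality then yields $\|\hat\mu_k^{\mathcal S_1}-\mu_k\| = O_P(m^{-1/2})$; alternatively one may cite \citet[Theorem 27]{tolstikhin2017}. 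Multiplying the $d$ bounds gives
\[
\bigl|\E[\Phi^{\dHSIC}(Z_i,Z_j)\mid\mathcal F_{i-1}]\bigr| \le \prod_k 2\sqrt{\kappa_k}\,\|\mu_k-\hat\mu_k^{\mathcal S_1}\|.
\]
A product of finitely many $O_P(m^{-1/2})$ terms is $O_P(m^{-d/2})$. This step needs no independence between the factors, since for each $\varepsilon$ one can take a union bound over the $d$ tail events. The resulting bound depends only on $\mathcal S_1$, so it holds uniformly over all pairs $(i,j)$ in $\mathcal S_2$. This uniformity is what is needed later when summing over the lower triangle.

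\emph{Main obstacle.} The main subtlety is the bookkeeping in Step 1 rather than the rate itself. The centring must be genuinely $\mathcal F_m$-measurable, which is why the hyperparameters must depend only on $\mathcal S_1$. If the bandwidths were chosen from all $n$ points, $Z_i$ would enter the centring and the conditional expectation would no longer factorise. The second point to watch is the implicit constant $\prod_k 2\sqrt{\kappa_k}$, which grows like $C^d$. The rate $m^{-d/2}$ is therefore a statement at fixed $d$ as $m\to\infty$. For normalised kernels ($\kappa_k = 1$) I would state it with explicit constant $(2/\sqrt m)^{d}$ times the normalised $O_P(1)$ norms, so that the claimed exponential smallness in $d$ holds once $m > 4$.
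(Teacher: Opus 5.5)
Your proposal follows the paper's proof step for step. You freeze $\mathcal F_{i-1}$, factorise over $k$ under joint independence, and move the expectation inside the inner product. You then bound each factor by Cauchy--Schwarz, using boundedness of $k_k$ and $\sqrt m$-consistency of $\hat\mu_k^{\mathcal S_1}$. Your extra details are correct: the deterministic bound $2\sqrt{\kappa_k}$, the union bound for the product of $O_P$ terms, and the remark that the bound depends only on $\mathcal S_1$ and is therefore uniform in $(i,j)$. Step~1 also remains valid with data-dependent bandwidths, because conditioning on $\mathcal F_{i-1}$ freezes the kernel together with $\hat\mu_k^{\mathcal S_1}$ and $X_j^k$.

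There is one gap, and the paper's proof shares it. Your Step~2 rate for $\|\hat\mu_k^{\mathcal S_1}-\mu_k\|_{\mathcal H_k}$ does not cover the case the lemma explicitly allows, namely $\sigma_k=\sigma_k(\mathcal S_1)$ genuinely data-dependent.

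\emph{Why it fails.} The feature map $\phi_k$, and with it $\mu_k$, is then selected from the very points $X_1^k,\dots,X_m^k$ that are being averaged. So $\hat\mu_k^{\mathcal S_1}-\mu_k$ is not an average of i.i.d.\ centred elements of a fixed Hilbert space. The cross terms $\langle\phi_k(X_l^k)-\mu_k,\phi_k(X_{l'}^k)-\mu_k\rangle$, $l\neq l'$, need not have mean zero. Consequently neither your identity $\E\|\hat\mu_k^{\mathcal S_1}-\mu_k\|^2=\tfrac1m(\E k_k(X^k,X^k)-\|\mu_k\|^2)$ nor the fixed-kernel result of \citet{tolstikhin2017} applies. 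You need some uniformity over the hyperparameter.

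\emph{A short repair for Gaussian kernels.} Take $k_\sigma(x,x')=e^{-\|x-x'\|^2/(2\sigma^2)}$ on $\R^p$. Suppose $\sigma_k(\mathcal S_1)\in[a,b]\subset(0,\infty)$ with probability tending to one; this holds for the median heuristic under mild conditions. Write $\|\hat\mu_\sigma-\mu_\sigma\|^2_{\mathcal H_\sigma}=\int|\hat\varphi_m(\omega)-\varphi(\omega)|^2\lambda_\sigma(\omega)\,d\omega$. Here $\hat\varphi_m$ and $\varphi$ are the empirical and population characteristic functions, and $\lambda_\sigma(\omega)=(2\pi)^{-p/2}\sigma^p e^{-\sigma^2\|\omega\|^2/2}$ is the spectral density. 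Since $\lambda_\sigma\le(b/a)^p\lambda_a$ for all $\sigma\in[a,b]$, you get $\sup_{\sigma\in[a,b]}\|\hat\mu_\sigma-\mu_\sigma\|_{\mathcal H_\sigma}\le(b/a)^{p/2}\|\hat\mu_a-\mu_a\|_{\mathcal H_a}$. Your variance computation now applies to the fixed kernel $k_a$, giving $O_P(m^{-1/2})$. The Laplace kernel is handled the same way through its Cauchy-type spectral density. With this inserted, the rest of your Step~2 goes through unchanged.
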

\begin{proof}
$Z_j$ and $\hat\mu_k^{\mathcal S_1}$ are determined by $\mathcal F_{i-1}$,
and under joint independence $(X_i^1, \ldots, X_i^d) \sim \prod_k
P_{X^k}$ factorises.  Evaluating one factor at a time:
\begin{equation*}
\E_{X_i^k}\bigl[\bar k_k^{\mathcal S_1}(X_i^k, X_j^k)\bigr]
  = \E_{X_i^k}\bigl[\langle\phi_k(X_i^k) - \hat\mu_k^{\mathcal S_1},
  \phi_k(X_j^k) - \hat\mu_k^{\mathcal S_1}\rangle\bigr]
  = \langle \mu_k - \hat\mu_k^{\mathcal S_1},
  \phi_k(X_j^k) - \hat\mu_k^{\mathcal S_1}\rangle,
\end{equation*}
which multiplies over $k$ to the displayed product.  The
$\sqrt m$-consistency of $\hat\mu_k^{\mathcal S_1}$ for bounded
characteristic kernels \citep[Theorem 27]{tolstikhin2017} gives
$\|\mu_k - \hat\mu_k^{\mathcal S_1}\|_{\mathcal H_k} = O_P(1/\sqrt m)$
and boundedness of $k_k$ gives
$\|\phi_k(X_j^k) - \hat\mu_k^{\mathcal S_1}\|_{\mathcal H_k} = O_P(1)$,
so each factor is $O_P(1/\sqrt m)$ and the product is $O_P(m^{-d/2})$.
\end{proof}

Lemma~\ref{lem:Phi-d-split-cond} is the core of the split-martingale
construction.  The per-variable empirical-centring bias is still
$O_P(1/\sqrt m)$, but because it enters the inner expectation
\emph{multiplicatively} across the $d$ variables, the aggregate residual
is $O_P(m^{-d/2})$: exponentially small in $d$ for any $m \geq 2$
and negligible next to the studentised scale for any $d \geq 2$.  Under
the direct-product construction of
Section~\ref{sec:mdhsic-why-split} this same bias enters
\emph{additively} across the $d$ variables (via a product expansion with
a $\sqrt n$-scale leading term) and grows to $O_P(d/\sqrt n)$, which
is why the direct construction fails.

\begin{algorithm}[t]
\caption{Permutation-free joint-independence test (split-martingale $\mdHSIC$)}
\label{alg:mdhsic-test}
\begin{algorithmic}[1]
\REQUIRE Sample $\{(X_i^1,\ldots,X_i^d)\}_{i=1}^{n}$ with $n \geq 6$, level $\alpha$, kernels $k_1,\ldots,k_d$.
\STATE $m \leftarrow \lfloor n/2 \rfloor$;\quad $\mathcal S_1 \leftarrow \{1,\ldots,m\}$;\quad $\mathcal S_2 \leftarrow \{m+1,\ldots,n\}$;\quad $n_2 \leftarrow n - m$.
\FOR{$k = 1,\ldots,d$}
  \STATE Compute the $n_2\!\times\! n_2$ sub-Gram $K_k^{(22)}$ on $\mathcal S_2$ and the $m\!\times\! n_2$ cross-Gram $K_k^{(12)}$ between $\mathcal S_1$ and $\mathcal S_2$.
  \STATE $\mu_k^{(\cdot)} \leftarrow \tfrac{1}{m}\one^\top K_k^{(12)}$ ($n_2$-vector);\quad $\nu_k \leftarrow \tfrac{1}{m^2}\one^\top K_k^{(11)}\one$ (scalar).
  \STATE $\bar K_k \leftarrow K_k^{(22)} - \one\, \mu_k^{(\cdot)} - \mu_k^{(\cdot)\top}\one^\top + \nu_k\,\one\one^\top$.\quad \COMMENT{$(\bar K_k)_{ij} = \bar k_k^{\mathcal S_1}(X_{m+i}^k, X_{m+j}^k)$}
\ENDFOR
\STATE $\Pi_{ij} \leftarrow \prod_{k=1}^{d}(\bar K_k)_{ij}$ for $i, j \in \{1,\ldots,n_2\}$, $i\neq j$.
\STATE $u_i \leftarrow \tfrac{1}{i}\sum_{j=1}^{i-1}\Pi_{ij}$ for $i = 2,\ldots,n_2$.
\STATE $\widehat T_n \leftarrow \tfrac{1}{n_2}\sum_{i=2}^{n_2} u_i$;\quad
       $\widehat\sigma_n^2 \leftarrow \tfrac{1}{n_2^2}\sum_{i=2}^{n_2} u_i^2$.
\STATE $\eta_n \leftarrow \widehat T_n / \widehat\sigma_n$.
\IF{$\eta_n > z_{1-\alpha}$}\STATE Reject $H_0$.\ELSE \STATE Fail to reject $H_0$.\ENDIF
\end{algorithmic}
\end{algorithm}

\subsection{Null distribution, consistency, and complexity}
\label{sec:mdhsic-null}

\begin{assumption}[Bounded fourth moments, $d$-variable]
\label{ass:mdhsic-moments}
Each $k_k$ is a bounded characteristic kernel and the
centred-kernel product $\Phi^{\dHSIC,\,\star}$ satisfies
$\E[\Phi^{\dHSIC,\,\star}(Z_1, Z_2)^2] \in (0, \infty)$ and
$\E[\Phi^{\dHSIC,\,\star}(Z_1, Z_2)^4] < \infty$ for $Z_1, Z_2$
i.i.d.\ from $P_{(X^1, \ldots, X^d)}$.  This is satisfied for any
bounded kernel under $\sum_{k}\E[\bar k_k(X_1^k, X_2^k)^{4d}] < \infty$
(e.g.\ Gaussian and Laplace kernels).
\end{assumption}

\begin{theorem}[Null distribution of split-martingale $\mdHSIC$]
\label{thm:mdhsic-null}
Under Assumption~\ref{ass:mdhsic-moments} and
$H_0: P_{(X^1,\ldots,X^d)} = \prod_{k=1}^{d} P_{X^k}$,
\begin{equation}
\eta_n^{\mdHSIC} \;\xrightarrow{d}\; \mathcal N(0, 1) \qquad \text{as }
n \to \infty,
\end{equation}
regardless of the marginals $P_{X^1}, \ldots, P_{X^d}$ and for every
fixed $d \geq 2$.
\end{theorem}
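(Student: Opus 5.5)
Conditioning on the first half is the organising idea of the proof. I would work conditionally on $\mathcal F_m$: given $\mathcal S_1$ (and hence the bandwidths and $\hat\mu_k^{\mathcal S_1}$), the second-half observations $Z_{m+1},\ldots,Z_n$ are i.i.d.\ from $\prod_k P_{X^k}$, and $\Phi^{\dHSIC}$ is a fixed, symmetric, bounded kernel on them. Write $n_2 = n-m$ and $u_i := \tfrac{1}{i-m}\sum_{j=m+1}^{i-1}\Phi^{\dHSIC}(Z_i,Z_j)$. Decompose $u_i = \xi_i + b_i$, where $b_i := \E[u_i\mid\mathcal F_{i-1}]$ and $\xi_i := u_i - b_i$. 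The $\xi_i$ are then a martingale-difference sequence with respect to $(\mathcal F_{i-1})_{i>m}$ by construction. Lemma~\ref{lem:Phi-d-split-cond} gives
\[
b_i = \tfrac{1}{i-m}\sum_{j}\prod_k\langle\mu_k-\hat\mu_k^{\mathcal S_1},\phi_k(X_j^k)-\hat\mu_k^{\mathcal S_1}\rangle ,
\]
and by boundedness $|b_i|\le C^d\prod_k\|\mu_k-\hat\mu_k^{\mathcal S_1}\|$ uniformly in $i$. Hence $\tfrac1{n_2}\sum_i b_i = O_P(m^{-d/2})$, using the $\sqrt m$-consistency of the mean embedding \citep[Theorem 27]{tolstikhin2017} on $\mathcal S_1$.

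\emph{Step 1: the martingale part.} Apply the self-normalised martingale CLT of \citet{fan2018berry}, in the form used for $\mMMD$ in \citet[Theorem 3.3]{chatterjee2025}, to $\sum_i\xi_i/\sqrt{\sum_i\xi_i^2}$ conditionally on $\mathcal F_m$. To do so I would check the Lindeberg/fourth-moment and conditional-variance-stability conditions for the sequence $\xi_i$ built from $\Phi^{\dHSIC}$. The cleanest route is to note that $\bar k_k^{\mathcal S_1}\to\bar k_k$ uniformly on the (bounded) range, with deviation $O_P(m^{-1/2})$. Consequently $\Phi^{\dHSIC}-\Phi^{\dHSIC,\star} = O_P(d\,m^{-1/2})$ uniformly, and for fixed $d$ the second and fourth conditional moments of $\Phi^{\dHSIC}$ converge to those of $\Phi^{\dHSIC,\star}$. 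Assumption~\ref{ass:mdhsic-moments} then supplies the nondegeneracy $\E[(\Phi^{\dHSIC,\star})^2]>0$ and the finite fourth moment on an event of probability tending to one. Since the conditional limit $\mathcal N(0,1)$ does not depend on $\mathcal F_m$, dominated convergence removes the conditioning.

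\emph{Step 2: the drift and the denominator.} The scale of $\sum_i u_i/n_2$ is that of $\widehat\sigma_n^{\mdHSIC}$. Each $u_i$ has conditional variance of order $1/(i-m)$, so $\sum_i \xi_i^2\asymp \log n_2$ and $\widehat\sigma_n\asymp\sqrt{\log n_2}/n_2$. The studentised drift is therefore $O_P(m^{-d/2}\,n_2/\sqrt{\log n_2})$. This is the step I expect to be the main obstacle. For $d\ge3$ it is $o_P(1)$ outright. At $d=2$, however, the crude bound gives $O_P(1/\sqrt{\log n})\to0$ only barely, and that needs care. I would refine the estimate of $b_i$ by writing $\phi_k(X_j^k)-\hat\mu_k^{\mathcal S_1} = (\phi_k(X_j^k)-\mu_k)+(\mu_k-\hat\mu_k^{\mathcal S_1})$. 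The first piece averaged over $j$ has conditional mean zero and is $O_P((i-m)^{-1/2})$ in each factor, so $b_i = O_P(m^{-d/2}[(i-m)^{-1/2}+m^{-1/2}]^d)$. Summing over $i$ gives a drift of order $O_P(m^{-d}\,n_2)$, which is $O_P(m^{-1})$ for $d=2$ and is clearly negligible against the scale. For the denominator, I would show that $\sum_i u_i^2/\sum_i\xi_i^2\to1$ in probability by Cauchy--Schwarz: the cross term is bounded by $\sqrt{\sum_i b_i^2}\sqrt{\sum_i\xi_i^2}$, and $\sum_i b_i^2=o_P(\sum_i\xi_i^2)$ by the bound just obtained.

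\emph{Step 3: Slutsky.} Combining the two steps, $\eta_n^{\mdHSIC} = \bigl(\sum_i\xi_i+\sum_i b_i\bigr)/\sqrt{\sum_i u_i^2}$ equals the self-normalised martingale of Step 1 plus an $o_P(1)$ term, times a factor tending to one. Slutsky's theorem therefore gives $\eta_n^{\mdHSIC}\xrightarrow{d}\mathcal N(0,1)$ for every fixed $d\ge2$, with no dependence on the marginals beyond Assumption~\ref{ass:mdhsic-moments}.
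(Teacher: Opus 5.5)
Your proposal has the same structure as the paper's proof, and it goes through. Both condition on $\mathcal F_m$, write each row average as a martingale difference plus the conditional mean from Lemma~\ref{lem:Phi-d-split-cond}, apply the self-normalised martingale CLT conditionally on $\mathcal F_m$, show the denominator is unaffected, and conclude by Slutsky.

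Two of your choices are sharper than the paper's. First, you identify the null scale correctly as $\widehat\sigma_n^{\mdHSIC}\asymp\sqrt{\log n_2}/n_2$. The paper instead argues that an $o_P(n^{-1/2})$ residual is negligible after studentisation, which would not suffice at that scale. What actually saves the argument is your crude-bound computation: the drift is $O_P(m^{-d/2})\le O_P(1/n)=o_P(\sqrt{\log n_2}/n_2)$. Second, your uniform $O_P(m^{-1/2})$ comparison of $\Phi^{\dHSIC}$ with $\Phi^{\dHSIC,\star}$ deals with the fact that the split-centred kernel changes with $n$, which the paper's H\"older remark passes over.

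The one incorrect step is your refinement of $b_i$, though nothing depends on it. Because $b_i$ averages over $j$ a product over $k$, the averaging does not act factor by factor. Write each factor as $\langle e_k,\phi_k(X_j^k)-\mu_k\rangle+\|e_k\|^2$ with $e_k:=\mu_k-\hat\mu_k^{\mathcal S_1}$. The leading piece $\tfrac{1}{i-m}\sum_j\prod_k\langle e_k,\phi_k(X_j^k)-\mu_k\rangle$ is then an average of conditionally i.i.d.\ mean-zero terms of size $O_P(m^{-d/2})$. It is mean zero because of independence across $k$ under $H_0$. So only one factor $(i-m)^{-1/2}$ is gained, not $d$ of them. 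This gives $b_i=O_P(m^{-d/2}(i-m)^{-1/2}+m^{-d})$ and $\sum_i b_i=O_P(\sqrt{n_2}\,m^{-d/2}+n_2m^{-d})$. At $d=2$ that is $O_P(n^{-1/2})$, not $O_P(m^{-1})$.

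This corrected drift is still $o_P(\sqrt{\log n_2})$, so your conclusion survives. You can simply drop the refinement. The crude bound already gives a studentised drift of $O_P(1/\sqrt{\log n})=o_P(1)$ at $d=2$. For the denominator, $\sum_i b_i^2\le n_2\,O_P(m^{-d})=o_P(\log n_2)$ is enough.
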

\begin{proof}[Proof sketch]
\emph{(i) Conditional centring.}  By
Lemma~\ref{lem:Phi-d-split-cond}, the inner sum
$\xi_i = \tfrac{1}{i-m}\sum_{m < j < i} \Phi^{\dHSIC}(Z_i, Z_j)$ has
conditional expectation $O_P(m^{-d/2}) = o_P(n^{-1})$ for $d \geq 2$,
so the sequence $(\xi_i - \E[\xi_i\,|\,\mathcal F_{i-1}])_{i = m+2}^n$
is an exact martingale-difference sequence whose cumulative bias is
$o_P(1)$ after studentisation.
\emph{(ii) Self-normalised martingale CLT.}  Under
Assumption~\ref{ass:mdhsic-moments} the per-variable kernel is symmetric,
bounded, and has finite fourth moment under the marginal product law;
H\"older's inequality transports this to $\Phi^{\dHSIC}$ itself.  The
Berry--Esseen bound for self-normalised martingales of
\citet{fan2018berry}, applied in identical form in
\citet[Theorem 3.3]{chatterjee2025} and in Theorem~\ref{thm:mhsic-null}
above, yields $\eta_n^{\mdHSIC} \xrightarrow{d} \mathcal N(0, 1)$.
Full proof in Appendix~\ref{app:mdhsic-proof}.
\end{proof}

The dominant cost is the $d$ rectangular and square Gram matrices at
$O(d\,(m\,n_2 + n_2^2)) = O(d\,n^2)$ and the elementwise-product
reduction at $O(d\,n_2^2) = O(d\,n^2)$, matching the biased
$V$-statistic $\dHSIC$ of \citep[Definition 2.6]{pfister2018} up to a
factor of $4$ (for the half-sample used in the studentised sum).
In contrast, standard $\dHSIC$ inherits an $O(B\,n^2)$ permutation
calibration, in practice often with $B \geq 200$ permutations
\citep[\S 3.2]{pfister2018}.
Split-martingale $\mdHSIC$ removes the
$B$ factor entirely, exactly as the cross-statistic $\xdHSIC$
\citep{liu2025} does, while additionally using both sub-samples
symmetrically, i.e., within-half pairs that $\xdHSIC$'s rectangular
block discards still contribute to the studentised statistic.

\section{Experiments}
\label{sec:experiments}

We validate $\mHSIC$ and $\mdHSIC$ along two axes: \emph{calibration}
(empirical type-I rate at the nominal level $\alpha = 0.05$ under
$H_0$) and \emph{test power} (rejection rate under fixed alternatives).
All experiments use Gaussian kernels with the median-heuristic
bandwidth
, in a float32 PyTorch implementation on a single NVIDIA RTX PRO 6000
Blackwell Server Edition GPU (96\,GB RAM).  An anonymised code bundle in
the supplementary material reproduces all results described henceforth.

\subsection{Data-generating process in the bivariate setting}
Let $X\sim\mathrm{Uniform}([-1,1]^{d_{\mathrm{ambient}}})$ and
$Y = a\,G(F(X)) + E$.  $F$ and $G$ are independently drawn random
mixtures of linear, cubic, and $\tanh$ activations of a random linear
map (full DGP in Appendix~\ref{app:method-comparison}), and the noise $E$
has i.i.d.\ entries drawn per trial from
$\{\mathrm{Gaussian}, \mathrm{Laplace}, \mathrm{Uniform}\}$ (unit variance,
fixed noise scale $0.25$).  $a \geq 0$ controls the
signal-to-noise ratio with $a = 0$ enforcing $X\indep Y$ exactly.
We call the per-variable input dimension
$d_{\mathrm{ambient}}$
the
\emph{ambient} dimension, to distinguish it from the integer $d$
counting jointly tested variables.

\subsection{Calibration and power in the bivariate setting}
We iterate over $d_{\mathrm{ambient}} \in \{1, 10, 50, 100, 500\}$,
$n$ ranging from $10$ to $16{,}000$ on a logarithmic grid, and
$a \in \{0.0, 0.2, 0.4, 0.6, 0.8, 1.0\}$, running $M = 1{,}000$
Monte-Carlo trials per cell and benchmarking $\mHSIC$ against
permutation-calibrated HSIC with $B = 200$ random label permutations.
When $n$ is sufficiently large relative to the ambient dimension, the
type-I rate at $a = 0$ is tight around $0.05$, by the
distribution-free calibration of Theorem~\ref{thm:mhsic-null} (at
$d_{\mathrm{ambient}} = 1$ it stays in $[0.015, 0.060]$ at every
$n$).  Power climbs monotonically in $a$ and shifts upwards with $n$.
Per-trial wall-clock times are reported in
Table~\ref{tab:mhsic-runtime}.

\begin{remark}[Finite-sample curse of dimensionality]
\label{rem:mhsic-curse}
The empirical-centring bias correction
(Appendix~\ref{app:mhsic-proof}, Step~2) leaves a residual that scales
empirically as $d_{\mathrm{ambient}}/n^{0.875}$, inflating type-I at
small $n$ (in the $d_{\mathrm{ambient}} = 500$ panel of
Figure~\ref{fig:mhsic-calibration} the rate decays from $0.36$ at
$n = 500$ to $0.07$ at $n = 16{,}000$).  As a heuristic,
$d_{\mathrm{ambient}} \lesssim n^{0.875}$ keeps type-I within an acceptable range of $\alpha$.
\end{remark}

\begin{figure}[t]
\centering
\includegraphics[width=\textwidth]{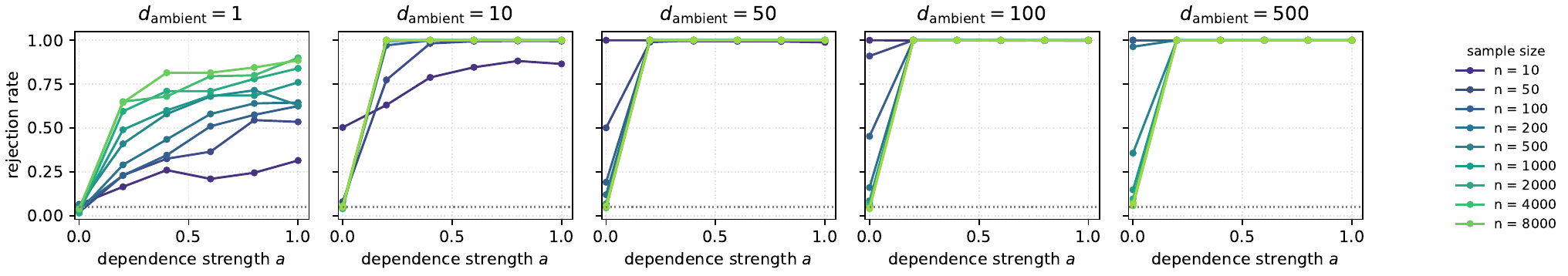}
\caption{Empirical rejection rate
at $\alpha = 0.05$ (horizontal dotted line) as a function of dependence strength
$a\in\{0.0, 0.2, 0.4, 0.6, 0.8, 1.0\}$  with $M = 1{,}000$ trials per cell of $\mHSIC$ on the random-mixture DGP.
One panel per ambient dimension $d_{\mathrm{ambient}}\in\{1, 10, 50, 100, 500\}$ and 
one curve per sample size $n \in \{10, 50, 100, 200, 500, 1{,}000, 2{,}000, 4{,}000,
8{,}000, 16{,}000\}$ (dark $\to$ light gradient).}
\label{fig:mhsic-calibration}
\end{figure}

\subsection{Joint-independence DGP and calibration in the d-variable setting}
For $d \in \{2, 3, 5, 8, 10\}$ random variables we use a
linear-Gaussian joint-independence DGP, with each
$X^k \in \R^{p}$ ($p = 5$) and, for $k \geq 2$,
$X^k = a\sum_{l<k} A_l X^l + \varepsilon^k$, where
$A_l\in\R^{p\times p}$ is drawn once per trial with i.i.d.\
$\mathcal N(0, 1/(pk))$ entries.  $a = 0$ enforces joint independence
exactly, whereas $a > 0$ produces pairwise (and therefore joint) dependence.
We run $M = 1{,}000$ trials across $n\in\{100, 500, 2{,}000, 5{,}000\}$ and
$a\in\{0.0, 0.2, 0.4, 0.6, 0.8, 1.0\}$.
Figure~\ref{fig:mdhsic-calibration} shows that the empirical type-I
rate of split-martingale $\mdHSIC$ at $a = 0$ stays in
$[0.036, 0.066]$ for every $(d, n)$, confirming the distribution-free
calibration of Theorem~\ref{thm:mdhsic-null}.  At $a > 0$, power
climbs monotonically in $a$ and grows with $n$. The detection
threshold on $a$ shifts rightward with $d$, consistent with the
$O_P(m^{-d/2})$ residual of Lemma~\ref{lem:Phi-d-split-cond}
compounding with a product-kernel signal that attenuates as $d$
grows.  We restrict $d \leq 10$, as do prior $\dHSIC$ and $\xdHSIC$
experiments \citep[\S 5]{pfister2018,liu2025}, for the same
attenuation reason.

\begin{figure}[t]
\centering
\includegraphics[width=\textwidth]{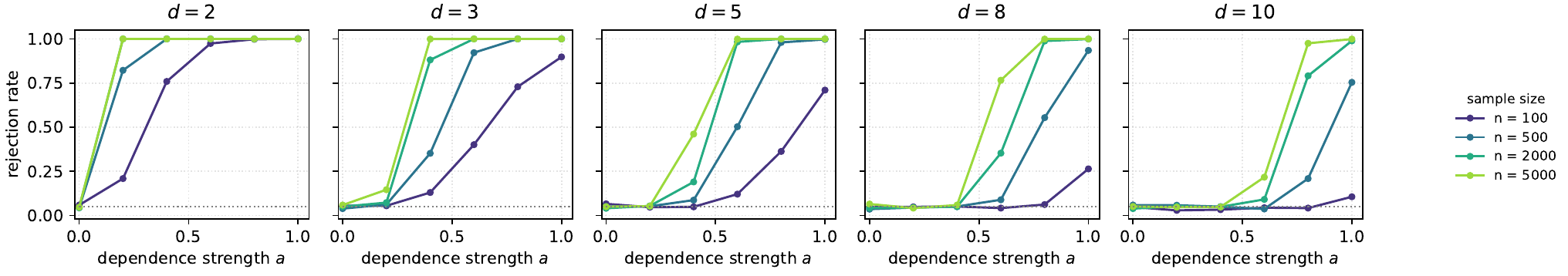}
\caption{Split-martingale $\mdHSIC$ on the linear-Gaussian
joint-independence DGP with $p = 5$, $M = 1{,}000$ repetitions per cell,
$\alpha = 0.05$.  One panel per number of variables
$d \in \{2, 3, 5, 8, 10\}$; one curve per sample size
$n \in \{100, 500, 2{,}000, 5{,}000\}$ (dark $\to$ light gradient).  The
horizontal dotted line marks the nominal level.  The value at $a = 0$
is the empirical type-I rate (range $0.036$--$0.066$ across all cells);
values at $a > 0$ estimate power.  The detection threshold on $a$
shifts rightward with $d$, consistent with the product-kernel
attenuation discussed in Section~\ref{sec:mdhsic-null}.}
\label{fig:mdhsic-calibration}
\end{figure}

\subsection{Comparison against split-based alternatives}
\label{sec:method-comparison-summary}

\begin{table}[h]
\centering
\caption{Per-test GPU wall-clock time (seconds, mean over $1{,}000$
trials) on the random-mixture DGP with $d_{\mathrm{ambient}} = 10$,
comparing $\mHSIC$ against permutation-HSIC with $B = 200$.  
}
\label{tab:mhsic-runtime}
\footnotesize
\setlength{\tabcolsep}{2.8pt}
\begin{tabular}{@{}lrrrrrr@{}}
\toprule
$n$ & $500$ & $1{,}000$ & $2{,}000$ & $4{,}000$ & $8{,}000$ & $16{,}000$ \\
\midrule
HSIC-perm & $0.028$ & $0.029$ & $0.034$ & $0.129$ & $0.650$ & $2.532$ \\
$\mHSIC$  & $0.0011$ & $0.0012$ & $0.0013$ & $0.0039$ & $0.014$ & $0.042$ \\
\midrule
Speed-up vs.\ HSIC-perm & $25\times$ & $24\times$ & $26\times$ & $33\times$ & $46\times$ & $60\times$ \\
\bottomrule
\end{tabular}
\end{table}

We additionally benchmark $\mHSIC$ against the cross-statistic
$\xHSIC$ \citep{shekhar2023} and HSIC-perm \citep{gretton2008} on the
random-mixture DGP at $d_{\mathrm{ambient}} \in \{1, 10, 500\}$ and
two sample sizes per ambient dimension, and $\mdHSIC$ against
$\xdHSIC$ \citep{liu2025} and $\dHSIC$-perm \citep{pfister2018} on
the linear-Gaussian joint-independence DGP at two $(d, n)$ cells
($M = 500$ trials, $\alpha = 0.05$, $B = 200$ permutations, all
methods sharing the same Gaussian median-heuristic bandwidths).
Inside the calibration regime, $\mHSIC$ tracks HSIC-perm at every $a > 0$ and
both dominate $\xHSIC$ at small $n$. For joint independence,
$\dHSIC$-perm has the highest power, followed by $\xdHSIC$ and then
$\mdHSIC$, with the gap reflecting the information cost of our implemented
half-sample split.
Computationally, $\mdHSIC$ runs at $O(d\,n^2)$ versus $O(B\,d\,n^2)$
for $\dHSIC$-perm, explaining the $25$ to $60\times$ speedup we observe in our experiments
(Table~\ref{tab:mhsic-runtime}).  Full results are in
Appendix~\ref{app:method-comparison}.

\section{Discussion}
\label{sec:discussion}

We presented two permutation-free kernel statistics, $\mHSIC$ and
$\mdHSIC$: studentised lower-triangular sums of conditionally
centred bivariate kernels that converge to $\mathcal N(0, 1)$ under
the relevant null regardless of the data law, at a per-test cost of
$O(n^2)$ and $O(d\,n^2)$ respectively, matching the corresponding
biased $V$-statistics without the permutation factor $B$.
Consistency against any fixed alternative follows from a calculation
parallel to \citet[Theorem 5.1]{chatterjee2025}.  $\mdHSIC$ adopts
the half-sample split of $\xdHSIC$ \citep{liu2025} for centring
only, then runs the martingale over the second half.

\bibliographystyle{plainnat}
\bibliography{references}

@inproceedings{balsubramani2016sequential,
  author    = {Balsubramani, Akshay and Ramdas, Aaditya},
  title     = {Sequential nonparametric testing with the law of the iterated logarithm},
  booktitle = {Proceedings of the Conference on Uncertainty in Artificial Intelligence (UAI)},
  year      = {2016},
}

@article{chatterjee2025,
  author  = {Chatterjee, Anirban and Ramdas, Aaditya},
  title   = {A martingale kernel two-sample test},
  journal = {arXiv preprint arXiv:2510.11853},
  year    = {2025},
}

@article{fan2018berry,
  author  = {Fan, Xiequan},
  title   = {Sharp large deviations for sums of bounded from above random variables},
  journal = {Science China Mathematics},
  volume  = {61},
  number  = {11},
  pages   = {2179--2192},
  year    = {2018},
}

@inproceedings{gretton2008,
  author    = {Gretton, Arthur and Fukumizu, Kenji and Teo, Choon Hui and Song, Le and Sch{\"o}lkopf, Bernhard and Smola, Alex},
  title     = {A kernel statistical test of independence},
  booktitle = {Advances in Neural Information Processing Systems (NeurIPS)},
  year      = {2008},
}

@article{gretton2012kernel,
  author  = {Gretton, Arthur and Borgwardt, Karsten M. and Rasch, Malte J. and Sch{\"o}lkopf, Bernhard and Smola, Alex},
  title   = {A kernel two-sample test},
  journal = {Journal of Machine Learning Research},
  volume  = {13},
  pages   = {723--773},
  year    = {2012},
}

@inproceedings{liu2025,
  author    = {Liu, Zhaolu and Peach, Robert L. and Barahona, Mauricio},
  title     = {Permutation-free high-order interaction tests},
  booktitle = {Proceedings of the International Conference on Machine Learning (ICML)},
  year      = {2025},
  note      = {arXiv:2506.05963},
}

@article{pfister2018,
  author  = {Pfister, Niklas and B{\"u}hlmann, Peter and Sch{\"o}lkopf, Bernhard and Peters, Jonas},
  title   = {Kernel-based tests for joint independence},
  journal = {Journal of the Royal Statistical Society, Series B},
  volume  = {80},
  number  = {1},
  pages   = {5--31},
  year    = {2018},
}

@article{shekhar2023,
  author  = {Shekhar, Shubhanshu and Kim, Ilmun and Ramdas, Aaditya},
  title   = {A permutation-free kernel independence test},
  journal = {Journal of the Royal Statistical Society, Series B},
  volume  = {85},
  number  = {5},
  pages   = {1701--1726},
  year    = {2023},
}

@article{shekhar2023nonparametric,
  author  = {Shekhar, Shubhanshu and Ramdas, Aaditya},
  title   = {Nonparametric two-sample testing by betting},
  journal = {IEEE Transactions on Information Theory},
  volume  = {69},
  number  = {11},
  pages   = {6890--6927},
  year    = {2023},
}

@article{tolstikhin2017,
  author  = {Tolstikhin, Ilya and Sriperumbudur, Bharath K. and Muandet, Krikamol},
  title   = {Minimax estimation of kernel mean embeddings},
  journal = {Journal of Machine Learning Research},
  volume  = {18},
  pages   = {1--47},
  year    = {2017},
}

\appendix

\section{Proof of asymptotic normality of the martingale statistic}
\label{app:mhsic-proof}

We give the full argument behind Theorem~\ref{thm:mhsic-null}.

\paragraph{Step 1: Population-centring statistic is a self-normalised
martingale-difference sum.}
With $\Phi^{\HSIC}$ defined in~\eqref{eq:Phi-hsic}, set
\begin{equation}
\widehat T_n^{\mHSIC,\star} = \frac{1}{n}\sum_{i=2}^n\frac{1}{i}\sum_{j=1}^{i-1}\Phi^{\HSIC}(Z_i, Z_j),\qquad
\bigl(\widehat\sigma_n^{\mHSIC,\star}\bigr)^2 = \frac{1}{n^2}\sum_{i=2}^n\Bigl(\frac{1}{i}\sum_{j=1}^{i-1}\Phi^{\HSIC}(Z_i, Z_j)\Bigr)^2.
\end{equation}
By Lemma~\ref{lem:Phi-cond}, under $H_0$ the inner sum
$\xi_i = \tfrac{1}{i}\sum_{j<i}\Phi^{\HSIC}(Z_i, Z_j)$ is a martingale
difference w.r.t.\ $\mathcal F_{i-1}$.  This is exactly the structure of
$\mMMD$'s $T_n$ in~\eqref{eq:mmmd} with $\Phi^{\mMMD}$ replaced by $\Phi^{\HSIC}$.
By Assumption~\ref{ass:mhsic-moments}, $\Phi^{\HSIC}$ is symmetric,
bounded, and has finite fourth moment.  The Berry--Esseen bound for
self-normalised martingales of \citet{fan2018berry} (used in identical
form in the proof of \citet[Theorem 3.1 \& 3.3]{chatterjee2025}) yields
$\widehat T_n^{\mHSIC,\star} / \widehat\sigma_n^{\mHSIC,\star}
\xrightarrow{d}\mathcal N(0, 1)$.

\paragraph{Step 2: Empirical-centring bias.}
Let $\hat\mu_X^{(n)} := \tfrac{1}{n}\sum_a \phi(X_a)\in\mathcal H_X$
be the empirical mean embedding (cf.\ Section~\ref{sec:mhsic-null})
and $e_X := \hat\mu_X^{(n)} - \mu_X$ the corresponding centring
error.  By \citet[Theorem 27]{tolstikhin2017} for bounded
characteristic kernels, $\|e_X\|_{\mathcal H_X} = O_P(1/\sqrt n)$, and
the same holds for $e_Y\in\mathcal H_Y$.  Expanding
$(HK_XH)_{ij} = \langle\phi(X_i) - \hat\mu_X^{(n)},\,\phi(X_j) -
\hat\mu_X^{(n)}\rangle_{\mathcal H_X}$ and
$\bar k_X(X_i, X_j) = \langle\phi(X_i) - \mu_X,\,
\phi(X_j) - \mu_X\rangle_{\mathcal H_X}$,
\begin{equation}
\Delta_{ij}^X := (HK_XH)_{ij} - \bar k_X(X_i, X_j) = -\langle\phi(X_i) - \mu_X, e_X\rangle
- \langle\phi(X_j) - \mu_X, e_X\rangle + \|e_X\|^2_{\mathcal H_X}.
\end{equation}
Writing $C_X := \sup_{x}\sqrt{k_X(x, x)}$ for the kernel-induced
feature-norm bound (finite by boundedness of $k_X$),
Cauchy--Schwarz gives $|\Delta_{ij}^X| \leq 2C_X\|e_X\|_{\mathcal H_X}
+ \|e_X\|^2_{\mathcal H_X} = O_P(1/\sqrt n)$ uniformly in $(i, j)$,
and likewise $|\Delta_{ij}^Y| = O_P(1/\sqrt n)$.

The product expansion is
\begin{equation}
(HK_XH)_{ij}(HK_YH)_{ij} = \bar k_X\bar k_Y + \bar k_X\Delta_{ij}^Y +
\bar k_Y\Delta_{ij}^X + \Delta_{ij}^X\Delta_{ij}^Y.
\end{equation}
The term $\Delta_{ij}^X\Delta_{ij}^Y = O_P(1/n)$ contributes $O_P(1/n)$
to $\widehat T_n^{\mHSIC} - \widehat T_n^{\mHSIC,\star}$ and
$\sqrt n \cdot O_P(1/n) = o_P(1)$ to the studentised statistic.  The
two cross terms $\bar k_X\Delta^Y$ and $\bar k_Y\Delta^X$ are individually
$O_P(1/\sqrt n)$ in absolute value but, when summed over the
lower-triangular indices and divided by $n$, telescope to
$O_P(1/n)$ contributions: applying the analogous expansion
$\Delta_{ij}^Y = -\langle\phi(Y_i) - \mu_Y, e_Y\rangle
- \langle\phi(Y_j) - \mu_Y, e_Y\rangle + \|e_Y\|^2_{\mathcal H_Y}$ (with
$e_Y := \hat\mu_Y^{(n)} - \mu_Y$) and using
$\E[\bar k_X(X_i, X_j)\,|\,\mathcal F_{i-1}, X_j] = 0$ from
Lemma~\ref{lem:Phi-cond}, the conditional expectation of each cross
term given $\mathcal F_{i-1}$ vanishes to leading order in $e_Y$, so
the cross-term contribution to the lower-triangular sum is itself
a martingale-difference sum whose magnitude is
$O_P(1/n)$.  Combining,
$\widehat T_n^{\mHSIC} - \widehat T_n^{\mHSIC,\star} = O_P(1/n)$.

\paragraph{Step 3: Variance is asymptotically equivalent.}
The same expansion gives $(\widehat\sigma_n^{\mHSIC})^2 -
(\widehat\sigma_n^{\mHSIC,\star})^2 = O_P(1/n)$, because the cross
terms entering the variance estimator are squared versions of the cross
terms in the numerator and so are at most $O_P(1/n)$.

\paragraph{Step 4: Slutsky.}
Combining Steps 1--3,
\begin{equation}
\eta_n^{\mHSIC} = \frac{\widehat T_n^{\mHSIC,\star} + O_P(1/n)}{\widehat\sigma_n^{\mHSIC,\star}\bigl(1 + o_P(1)\bigr)} \xrightarrow{d} \mathcal N(0, 1)
\end{equation}
by Slutsky's theorem.  The result follows.

\paragraph{Why the lower-triangular structure.}
The lower-triangular pairing makes the inner sum
$\xi_i = \tfrac{1}{i}\sum_{j<i}\Phi^{\HSIC}(Z_i, Z_j)$ a martingale
difference: by Lemma~\ref{lem:Phi-cond}, $\E[\xi_i\,|\,\mathcal F_{i-1}] =
\tfrac{1}{i}\sum_{j<i}\E[\Phi^{\HSIC}(Z_i, Z_j)\,|\,\mathcal F_{i-1}] =
0$.  Had we summed over all $n^2$ Hadamard entries (as in the standard
biased $V$-statistic of HSIC), the contributions sharing the index $i$
would be doubly counted across diagonals $(i, j)$ and $(j, i)$, the
diagonal entries $\Phi^{\HSIC}(Z_i, Z_i) \neq 0$ would inflate the
variance, and the standard non-degenerate $U$-statistic CLT under $H_0$
would be inapplicable due to the degeneracy of the kernel $\Phi^{\HSIC}$
under the null.  The lower-triangular restriction selects the $n(n-1)/2$
pairs that become a martingale-difference sum, and the per-row
normalisation $1/i$ restores the $O(n^2)$ asymptotic scale.

\section{Proof of asymptotic normality of the split-martingale d-variable statistic}
\label{app:mdhsic-proof}

We give the full argument behind Theorem~\ref{thm:mdhsic-null}.  Let
$m = \lfloor n/2 \rfloor$, $n_2 = n - m$,
$\mathcal S_1 = \{Z_1, \ldots, Z_m\}$,
$\mathcal S_2 = \{Z_{m+1}, \ldots, Z_n\}$, and (continuing the
shorthand of Section~\ref{sec:background}) $\mathcal F_i :=
(Z_1, \ldots, Z_i)$.  Write
\begin{equation}
\xi_i \;:=\; \frac{1}{i - m}\sum_{j=m+1}^{i-1}\Phi^{\dHSIC}(Z_i, Z_j),
\qquad i = m+2, \ldots, n,
\label{eq:xi-split}
\end{equation}
with $\Phi^{\dHSIC}$ defined in~\eqref{eq:Phi-d-split}, so
$\widehat T_n^{\mdHSIC} = (n_2)^{-1}\sum_{i=m+2}^n \xi_i$ and
$(\widehat\sigma_n^{\mdHSIC})^2 = (n_2)^{-2}\sum_{i=m+2}^n \xi_i^2$.

\paragraph{Step 1: Approximate martingale-difference decomposition.}
Write $\xi_i = \xi_i^{\star} + r_i$ where
\begin{equation}
\xi_i^{\star} \;:=\; \xi_i - \E[\xi_i\,|\,\mathcal F_{i-1}], \qquad
r_i \;:=\; \E[\xi_i\,|\,\mathcal F_{i-1}].
\end{equation}
By construction $(\xi_i^{\star})_{i = m+2}^n$ is a martingale
difference sequence adapted to $(\mathcal F_i)$; by
Lemma~\ref{lem:Phi-d-split-cond},
\begin{equation}
|r_i| \;\leq\; \prod_{k=1}^{d}\|\mu_k - \hat\mu_k^{\mathcal S_1}\|_{\mathcal H_k}
       \cdot \|\phi_k(X_j^k) - \hat\mu_k^{\mathcal S_1}\|_{\mathcal H_k}
  \;=\; O_P(m^{-d/2}),
\label{eq:r-bound}
\end{equation}
uniformly in $i > m$, where the $\|\mu_k - \hat\mu_k^{\mathcal
S_1}\|_{\mathcal H_k} = O_P(1/\sqrt m)$ bound is
\citet[Theorem 27]{tolstikhin2017} and the other factor is $O_P(1)$ by
boundedness of $k_k$.  Summing,
\begin{equation}
\widehat T_n^{\mdHSIC} \;=\; \frac{1}{n_2}\sum_{i=m+2}^n \xi_i^{\star}
   + \frac{1}{n_2}\sum_{i=m+2}^n r_i
   \;=\; \widehat T_n^{\mdHSIC,\star} + O_P(m^{-d/2}).
\end{equation}
For every fixed $d \geq 2$ and $m \geq n/2$, $O_P(m^{-d/2}) = o_P(n^{-1/2})$,
so the residual is negligible after studentisation.

\paragraph{Step 2: Self-normalised martingale CLT for the leading term.}
By Step~1, $\widehat T_n^{\mdHSIC,\star} = (n_2)^{-1}\sum_i \xi_i^{\star}$
is a self-normalised sum of martingale differences.  Under
Assumption~\ref{ass:mdhsic-moments}, $\Phi^{\dHSIC}$ is symmetric,
bounded, and has finite fourth moment conditional on $\mathcal F_m$
(boundedness and finite fourth moment of each $\bar k_k^{\mathcal
S_1}$ transport to the product by H\"older, since the $L^p$-norm of a
product of bounded zero-mean random variables is bounded by the
product of the $L^p$-norms).  The same Berry--Esseen bound for
self-normalised martingales of \citet{fan2018berry} used in the proof
of Theorem~\ref{thm:mhsic-null} applies conditionally on $\mathcal
F_m$, yielding
\begin{equation}
\widehat T_n^{\mdHSIC,\star} \,/\, \widehat\sigma_n^{\mdHSIC,\star}
  \;\xrightarrow{d}\; \mathcal N(0, 1)
  \qquad\text{as } n \to \infty,
\end{equation}
where $(\widehat\sigma_n^{\mdHSIC,\star})^2 = (n_2)^{-2}\sum_i
(\xi_i^{\star})^2$.

\paragraph{Step 3: Variance is asymptotically equivalent.}
Expanding $\xi_i^2 = (\xi_i^{\star} + r_i)^2 = (\xi_i^{\star})^2 + 2
\xi_i^{\star} r_i + r_i^2$ and summing, the cross-term is a
martingale-difference sum of magnitude $O_P(1/\sqrt{n_2} \cdot m^{-d/2})
= o_P(n^{-1})$ and the $r_i^2$ term is $O_P(m^{-d})$.  Both are
dominated by $(\widehat\sigma_n^{\mdHSIC,\star})^2$.  Hence
$(\widehat\sigma_n^{\mdHSIC})^2 = (\widehat\sigma_n^{\mdHSIC,\star})^2
(1 + o_P(1))$.

\paragraph{Step 4: Slutsky.}
Combining Steps 1--3, $\eta_n^{\mdHSIC} = (\widehat T_n^{\mdHSIC,\star}
+ o_P(n^{-1/2})) / (\widehat\sigma_n^{\mdHSIC,\star}(1 + o_P(1)))
\xrightarrow{d} \mathcal N(0, 1)$ by Slutsky's theorem.

\paragraph{Necessity of the half-sample split.}
Without the half-sample split, the centring would be the full-sample
empirical mean embedding $\hat\mu_k^{(n)} :=
\tfrac{1}{n}\sum_{a=1}^{n}\phi_k(X_a^k)\in\mathcal H_k$, which is not
determined by $\mathcal F_{i-1}$.  The product expansion then produces $d$
cross-terms of the form $\bar k_1 \cdots \bar k_{k-1}\,\Delta_{ij}^k\,
\bar k_{k+1}\cdots \bar k_d$ with
$\Delta_{ij}^k := (HK_kH)_{ij} - \bar k_k(X_i^k, X_j^k) = O_P(1/\sqrt
n)$.  When summed over the lower-triangular index set and divided by
$n$, each cross term individually contributes $O_P(1/n)$ (by the same
conditional-mean-zero argument as in
Appendix~\ref{app:mhsic-proof}), but the aggregate over all $d$ such
cross terms is $O_P(d/n)$, which after $\sqrt n$-scaling yields
$O_P(d/\sqrt n)$, dominating the studentised scale for $d \gtrsim
\sqrt n$.  The sample split replaces the full-sample centring by the
first-half empirical mean $\hat\mu_k^{\mathcal S_1}$, which is
determined by $\mathcal F_m$ and reduces the conditional-mean
residual to the product of $d$ inner products in~\eqref{eq:r-bound},
hence to $O_P(m^{-d/2})$, which is exponentially small in $d$.

\section{Method comparison against split-based alternatives}
\label{app:method-comparison}

For Section~\ref{sec:experiments}'s bivariate experiments,
$X\sim\mathrm{Uniform}([-1,1]^{d_{\mathrm{ambient}}})$ and
$Y = a\,G(F(X)) + E$ with $F(x) = \phi_F(A_F x)$,
$A_F\in\R^{d_{\mathrm{ambient}}\times d_{\mathrm{ambient}}}$ drawn
per trial with i.i.d.\ $\mathcal N(0, 1/d_{\mathrm{ambient}})$
entries, $\phi_F(z) = w_{F,1}\,z + w_{F,2}\,z^3 + w_{F,3}\,\tanh(z)$
and $(w_{F,1}, w_{F,2}, w_{F,3})\sim\mathrm{Dirichlet}(1,1,1)$; $G$
is drawn analogously with independent $A_G, w_G$.

Figures~\ref{fig:mhsic-comparison} and~\ref{fig:mdhsic-comparison}
compare $\mHSIC$ against the cross-statistic $\xHSIC$
\citep{shekhar2023} and permutation-HSIC \citep{gretton2008} on two
$(d_{\mathrm{ambient}}, n)$ cells per ambient dimension at
$d_{\mathrm{ambient}} \in \{1, 10, 500\}$ from the
Figure-\ref{fig:mhsic-calibration} grid, and split-martingale
$\mdHSIC$ against $\xdHSIC$ \citep{liu2025} and
permutation-$\dHSIC$ \citep{pfister2018} on two $(d, n)$ cells from
the Figure-\ref{fig:mdhsic-calibration} grid ($M = 500$ trials per
cell, $\alpha = 0.05$, $B = 200$ permutations, all methods sharing
the same Gaussian median-heuristic bandwidths).  In the independence
comparison (Figure~\ref{fig:mhsic-comparison}), $\mHSIC$ and
HSIC-perm track each other closely at every $a > 0$ and both
dominate $\xHSIC$ at small $n$; at $d_{\mathrm{ambient}} = 500$,
$n = 100$ the $\mHSIC$ type-I rate is inflated because the sample
falls outside the calibration regime of
Remark~\ref{rem:mhsic-curse}.  In the asymptotic cells
($n \geq 500$ at $d_{\mathrm{ambient}} \in \{1, 10\}$, $n \geq 2{,}000$
at $d_{\mathrm{ambient}} = 500$) all three methods are calibrated
and saturate at power~1 by $a = 0.2$.  In the joint-independence
comparison (Figure~\ref{fig:mdhsic-comparison}), $\dHSIC$-perm
delivers the highest power and saturates earliest in $a$, followed
by $\xdHSIC$ and then split-martingale $\mdHSIC$; all three are
correctly calibrated at $a = 0$ (with $\xdHSIC$ conservative,
rejecting below the nominal rate).  The power gap between $\mdHSIC$
and the permutation baseline grows with $d$, reflecting the
information cost of the half-sample split that $\mdHSIC$ pays to
secure distribution-free calibration.

\begin{figure}[t]
\centering
\includegraphics[width=\textwidth]{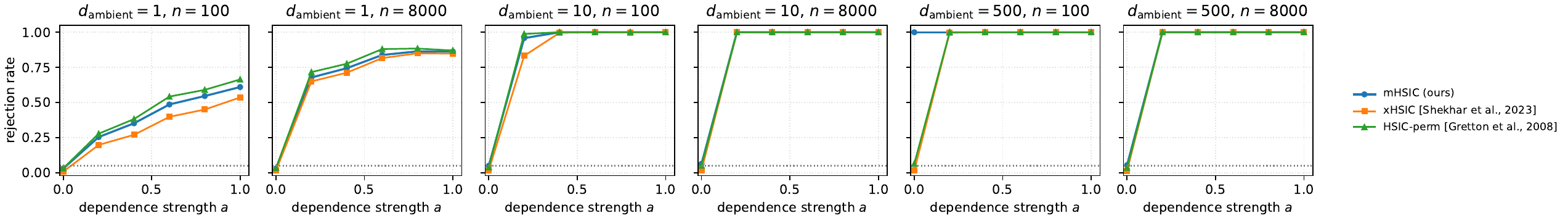}
\caption{Independence test comparison: $\mHSIC$ (blue) vs.\
$\xHSIC$ \citep{shekhar2023} (orange) vs.\ HSIC-perm
\citep{gretton2008} (green).  $M = 500$ trials, random-mixture DGP,
$\alpha = 0.05$, $B = 200$ permutations; one panel per
$(d_{\mathrm{ambient}}, n)$ cell with
$d_{\mathrm{ambient}} \in \{1, 10, 500\}$.  In the well-calibrated
regime, $\mHSIC$ and HSIC-perm have essentially identical power at
every $a > 0$; $\xHSIC$ trails at small $n$ and matches HSIC-perm by
$n = 8{,}000$.}
\label{fig:mhsic-comparison}
\end{figure}

\begin{figure}[t]
\centering
\includegraphics[width=\textwidth]{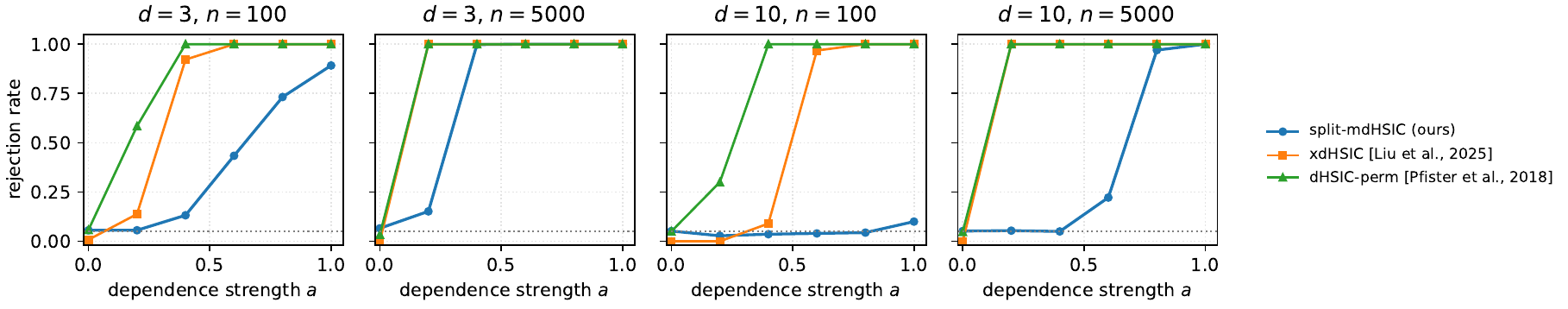}
\caption{Joint-independence test comparison: split-martingale
$\mdHSIC$ (blue) vs.\ $\xdHSIC$ \citep[orange]{liu2025} vs.\
$\dHSIC$-perm \citep[green]{pfister2018}.  $M = 500$ trials,
linear-Gaussian joint-independence DGP, $\alpha = 0.05$; one panel
per $(d, n)$ cell.  $\dHSIC$-perm has the highest power, followed by
$\xdHSIC$ and split-$\mdHSIC$; all three are calibrated at $a = 0$.
At low dependence strength $a$, $\mdHSIC$ requires more samples than
$\xdHSIC$ and $\dHSIC$-perm to reach the same test power.}
\label{fig:mdhsic-comparison}
\end{figure}

\end{document}